\documentclass{article} 
\usepackage{iclr2027_conference,times}

\usepackage{amsmath,amsfonts,bm}

\def\eqref#1{equation~\ref{#1}}

\def\1{\bm{1}}

\DeclareMathAlphabet{\mathsfit}{\encodingdefault}{\sfdefault}{m}{sl}
\SetMathAlphabet{\mathsfit}{bold}{\encodingdefault}{\sfdefault}{bx}{n}

\usepackage{hyperref}
\usepackage{url}

\usepackage{nicematrix, xcolor}
\usepackage{xspace}

\usepackage{booktabs}
\usepackage{multirow}
\usepackage{amsthm}
\usepackage{amssymb}
\usepackage{float}
\usepackage[section]{placeins}
\usepackage{booktabs}
\usepackage{adjustbox}
\usepackage{colortbl}
\usepackage{tikz}
\usepackage{wrapfig}

\definecolor{Gray}{gray}{0.5}
\definecolor{visualfailuregray}{HTML}{BABABA}
\definecolor{utilizationfailurered}{HTML}{D6604D}
\definecolor{failureboundgray}{HTML}{4D4D4D}

\newtheorem{theorem}{Theorem}
\newtheorem{assumption}{Assumption}

\newcommand{\method}{VELens\xspace}
\newcommand{\fullname}{Visual Evidence Lens\xspace}

\title{Encoded but Unused: Bounding the Sources of VLM Errors}

\author{
Anat Veikherman$^{1}$, Mor Ventura$^{1}$, Nitay Calderon$^{2}$,
Lihi Zelnik-Manor$^{1}$ \& Roi Reichart$^{2}$ \\
$^{1}$Faculty of Electrical and Computer Engineering, Technion \\
$^{2}$Faculty of Data and Decision Science, Technion \\
\texttt{\{anatv@campus.technion.ac.il, roiri@technion.ac.il\}}
}

\iclrfinalcopy 
\begin{document}

\maketitle

\begin{abstract}
In this paper, we investigate why vision-language models (VLMs) produce errors by decomposing them into two sources: \emph{visual-representation failures}, in which the visual evidence required for the correct answer is not encoded, and \emph{visual-utilization failures}, in which this evidence is encoded but not successfully used. We propose to estimate the visual-utilization failure rate using lightweight probes and derive theoretical bounds on the true rate that account for probe imperfections. We introduce \method~(\fullname), a rank-based probe that traces visual evidence for both a target object and its ground-truth attribute, such as color, material, or shape, across VLM layers. We evaluate \method on over 6{,}000 images using LLaVA-1.6-7B and Qwen2.5-VL-7B. Our method achieves ROC-AUC scores of 0.95 and 0.91, respectively, and outperforms alternative probes in both low-resource and unseen-attribute settings. Using it to diagnose errors, we estimate that visual-utilization failures account for 84.4\% of LLaVA errors and 70.0\% of Qwen errors, with conservative lower bounds of 77.5\% and 61.0\%, respectively. Overall, our study identifies visual utilization as the dominant bottleneck in VLMs and suggests that a central opportunity for improvement lies in developing better mechanisms to locate, route, and use the visual information they already encode.
\end{abstract}
\section{Introduction}
\label{sec:intro}

Vision-language models (VLMs) increasingly answer questions requiring complex reasoning over images \citep{DBLP:conf/nips/LiuLWL23a,DBLP:conf/cvpr/LiuLLL24,DBLP:conf/cvpr/GanzKABNML24,DBLP:journals/corr/abs-2502-13923}. Yet they still hallucinate, i.e., produce answers inconsistent with the visual content \citep{DBLP:conf/emnlp/LiDZWZW23,DBLP:conf/cvpr/Tong0Z0LX24,DBLP:conf/cvpr/WuX24a,DBLP:journals/corr/abs-2404-18930,DBLP:journals/ijcv/ChenMZYWWS26}. Consider Figure~\ref{fig:TheGoal}: the traffic light is red, but the model answers \emph{green}. For an autonomous vehicle approaching the intersection, such a basic error could determine whether it stops safely. Understanding why it occurs is therefore as important as detecting it: did the error occur because the model never encoded that the traffic light is red, or because it encoded this information but failed to use it when generating the answer?

These possibilities correspond to two distinct failure modes. In a \emph{visual-representation failure}, the internal visual representations lack the evidence required for the correct answer. In a \emph{visual-utilization failure}, that evidence is encoded but not used. The failures are indistinguishable from the final answer, yet point to different limitations and solutions. Representation failures call for improving the visual encoder, whereas utilization failures call for improving how the VLM locates, routes, and incorporates encoded visual evidence during generation. Without separating them, we cannot tell whether VLMs need better visual representations or better utilization.

Existing work on VLM errors primarily defines and detects incorrect outputs \citep{DBLP:conf/emnlp/LiDZWZW23,DBLP:conf/aaai/GunjalYB24,DBLP:journals/corr/abs-2402-00253,DBLP:journals/ijcv/ChenMZYWWS26} 
or mitigates them during training and inference \citep{DBLP:journals/corr/abs-2401-01313,DBLP:conf/acl/WangPDB24,DBLP:conf/cvpr/HuangDZ0H0L0Y24,DBLP:conf/cvpr/LengZCLLMB24,DBLP:conf/iclr/LiuLLWYW24,DBLP:conf/cvpr/YuYZHHCHL0024,DBLP:conf/icml/ZhaoDZG25}. 
Complementary studies examine VLM internals using hidden-representation probes \citep{DBLP:conf/iclr/AlainB17,DBLP:journals/corr/abs-2506-08008,DBLP:conf/eacl/KogilathotaGSZ26}, 
vocabulary-space projections \citep{DBLP:conf/iclr/NeoO0G0B25,DBLP:conf/iclr/JiangKPG25,DBLP:conf/cvpr/JiangCZLSY25}, and cross-modal attention and 
information-flow analyses \citep{DBLP:conf/cvpr/KaduriBD25,DBLP:conf/eccv/LiuZC24,DBLP:conf/cvpr/0009YHS25}. These approaches reveal where visual information appears and how it influences generation, 
but not how often errors reflect missing rather than unused evidence. Even recent multi-stage accounts focus on utilization without showing whether the required visual evidence 
was encoded \citep{DBLP:journals/corr/abs-2603-15557}.

\begin{figure}[t]
    \centering
    \includegraphics[width=0.85\linewidth]{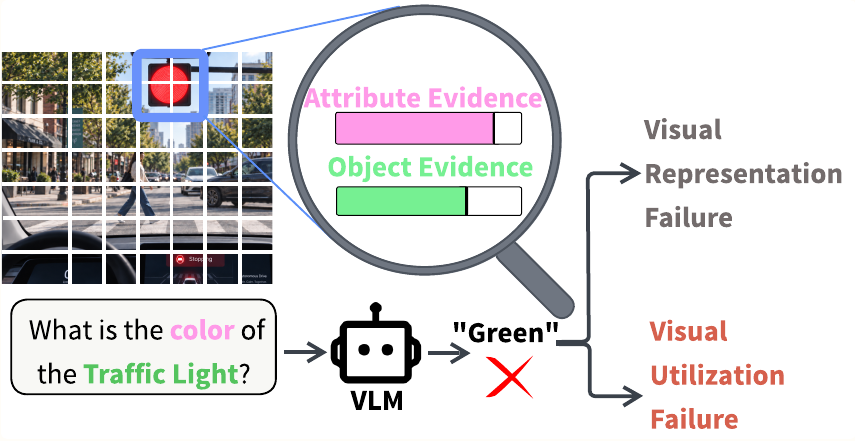}
    \caption{\textbf{Diagnosing the source of errors.}}
    \label{fig:TheGoal}
    \vspace{-0.8em}
\end{figure}

We therefore shift focus from whether a VLM answers correctly to whether it possesses the necessary visual evidence to do so. We quantify the \emph{visual-utilization failure rate}: 
the fraction of errors for which sufficient evidence is encoded but unsuccessfully used. The rest are visual-representation failures, where the evidence was never encoded. 
We estimate this unobservable rate with a \emph{probe} trained to extract a target signal from representations. 
However, it might guess correctly from spurious cues or fail to extract present evidence. 
We account for this uncertainty by developing theoretical bounds on the true rate using answer-preserving transformations that vary evidence salience. 

We introduce \method~(\fullname), a lightweight rank-based visual-evidence probe. We evaluate it on target-object attribute questions (``\emph{What is the color of the traffic light?}''), which require object and attribute evidence. Because visual-token representations are not directly interpretable, 
\method projects each language-model layer's representations into vocabulary space, providing a window into the VLM's encoded visual information. It measures whether tokens representing the target concept (the object or correct attribute value) rank above competing tokens and combines these scores across layers to determine whether object and attribute evidence are present.

To evaluate \method, we use six thousand object-rich images from the Visual Attributes in the Wild (VAW) dataset \citep{DBLP:conf/cvpr/PhamK0DCTS21}, focusing on color, material, and shape as a first step toward broader visual evidence, and two VLMs sharing the causal decoder-based design of current models, LLaVA-1.6-7B \citep{DBLP:conf/cvpr/LiuLLL24} and Qwen2.5-VL-7B \citep{DBLP:journals/corr/abs-2502-13923}.
Across all attribute categories and both VLMs, \method outperforms every baseline, and combined with the strongest object-evidence probe it reaches joint object-and-attribute ROC-AUCs of 0.95 on LLaVA and 0.91 on Qwen. We additionally test its generalization in low-resource and unseen-value settings, where it also outperforms the corresponding baselines, supporting its use for diagnosing error sources.

We then use \method to answer the paper's central question. On VAW, we estimate that 84.4\% of LLaVA errors and 70.0\% of Qwen errors are visual-utilization failures. More importantly, our conservative lower bound shows these are at least 77.5\% (LLaVA) and 61.0\% (Qwen) of errors overall. Thus, the main bottleneck is not representing the visual information, but utilizing it when producing the answer. We find no universal layer at which visual evidence becomes accessible. It depends heavily on the model and attribute. To test whether encoded evidence can be used more successfully, we increase attention toward the target-object region. As predicted, this achieves twice the correction rate on evidence-positive errors as on errors where \method detects no evidence, as proof of concept that, when visual evidence is encoded, helping the model use it can resolve failures.

In summary, our contributions are: (1) we formalize the distinction between visual-representation and visual-utilization failures, introduce an estimand for their relative prevalence, and derive bounds that account for an imperfect probe; (2) we introduce \method, a rank-based probe that traces object and attribute evidence across language-model layers, and characterize its scope and limitations (\S\ref{app:extensions}); (3) across two VLMs and three attributes, we show that visual utilization is the main bottleneck; and (4) we show that identifying whether the required visual evidence is encoded can guide error mitigation. Together, our study suggests that improving VLM reliability requires not only better encoding, but also better mechanisms to locate, route, and use what VLMs already encode.

\section{Related Work}
\label{sec:related}

\paragraph{Detection and Mitigation of Errors in VLMs.}
VLM responses often describe objects, attributes, or relations inconsistent with the image \citep{DBLP:journals/corr/abs-2404-18930, DBLP:journals/ijcv/ChenMZYWWS26, DBLP:journals/corr/abs-2402-00253}. Prior work has mainly focused on detecting such conflicts \citep{DBLP:conf/emnlp/LiDZWZW23, DBLP:conf/aaai/GunjalYB24, DBLP:journals/corr/abs-2402-15300, DBLP:journals/chinaf/YinFZXWSSLSC24}, 
treating VLMs as black boxes \citep{DBLP:conf/iclr/JiangKPG25}. Mitigation methods \citep{DBLP:journals/corr/abs-2401-01313} reduce errors through attention-guided grounding or constrained decoding at inference \citep{DBLP:conf/cvpr/HuangDZ0H0L0Y24, DBLP:conf/cvpr/LengZCLLMB24, DBLP:conf/icml/ZhaoDZG25}, or visual-text alignment objectives during training \citep{DBLP:conf/iclr/LiuLLWYW24, DBLP:conf/cvpr/YuYZHHCHL0024}. 
However, they address the output without identifying its underlying cause. While \citet{DBLP:journals/corr/abs-2603-15557} frame VLM errors as failures of internal processing, they analyze utilization without establishing whether the required visual evidence was encoded. We instead distinguish visual-representation failures, where evidence is not encoded, from visual-utilization failures, where encoded evidence is unsuccessfully used \citep{calderon26a}.

\paragraph{Interpretability of visual information in VLM.}
Motivated by evidence that LLM internal representations reveal information underlying model behavior \citep{DBLP:conf/iclr/OrgadTGRSKB25, DBLP:conf/emnlp/AzariaM23}, recent work studies how visual information is encoded and utilized within VLMs. 
First, \textit{classifier-based} methods, including linear probing \citep{DBLP:conf/iclr/AlainB17, DBLP:journals/corr/abs-2506-08008, DBLP:conf/eacl/KogilathotaGSZ26}, test whether visual concepts can be recovered from hidden states. Notably, \citet{DBLP:journals/corr/abs-2506-08008} shows that information easily recoverable from the intermediate vision encoder layers may fail to influence predictions. 
Second, \textit{confidence-based} methods project hidden states through the LM output head, following the logit lens \citep{nostalgebraist2020logitlens}, to identify vocabulary concepts supported by intermediate visual representations \citep{DBLP:conf/cvpr/JiangCZLSY25, DBLP:conf/iclr/NeoO0G0B25, DBLP:conf/iclr/JiangKPG25}. These studies show that object semantics are encoded within visual tokens and localized to corresponding image regions, become increasingly interpretable across layers, and provide early indications of visually unsupported objects in textual responses. 
Finally, \textit{cross-modality attention analysis} \citep{DBLP:conf/cvpr/KaduriBD25, DBLP:conf/cvpr/JiangCZLSY25, DBLP:conf/eccv/LiuZC24} tracks how visual features contribute to final outputs and how attention disruptions correlate with errors. Unlike per-layer or per-concept analyses, we analyze objects and attributes jointly across layers.

\section{Quantifying the Rate of Visual-Utilization Failures}

Understanding the source of VLM errors requires distinguishing two failure modes: \textbf{visual-representation failures}, in which the required visual evidence is not encoded, and \textbf{visual-utilization failures}, in which it is encoded but unsuccessfully used. Although the final answer alone cannot distinguish them, they reflect different limitations and call for different interventions. We therefore formalize this distinction and propose an estimand, estimator, and bounds for the proportion of errors from each source.

\subsection{Problem Formulation}
\label{subsec:problem_formulation}
For an input $X=(I,Q)$ consisting of an image and a query, 
let $Y_X$ denote its ground-truth answer and $f(X)$ the VLM prediction. 
An error (sometimes referred to as hallucination, and hence the notation) occurs when these answers differ, i.e., $H(X)=\mathbb{I}({f(X)\neq Y_X})$, and we denote the error event by $\mathcal{H}:={H(X)=1}$. Observing an error does not reveal which failure mode caused it: even with access to the raw VLM representations, whether they contain sufficient visual information to support the correct answer is not directly observable. Let $\phi_X\in{0,1}$ represent this latent state, where $\phi_X=1$ indicates that the representations contain sufficient signal to support $Y_X$. Conditional on an error, the presence of such a signal indicates that the evidence was encoded but not used successfully, and hence that the error is a visual-utilization failure. 
We therefore seek the proportion of errors attributable to visual-utilization failures:
\begin{equation*}
\mu_\phi := \mathbb{P}(\phi_X=1 \mid \mathcal{H}).
\end{equation*}
The remaining fraction, $1-\mu_\phi$, corresponds to visual-representation failures. Since $\phi_X$ is latent, we approximate it using a \emph{probe}: a lightweight secondary model trained on the VLM's internal representations to determine whether they contain evidence supporting an answer $Y$. Let $M(X,Y)\in\{0,1\}$ denote the probe output for an input $X$ and a candidate answer $Y$. Thus, $M(X,Y_X)=1$ indicates that the probe can extract evidence supporting the ground-truth answer $Y_X$. This suggests that the VLM's visual representations had the potential to support $Y_X$, but that this evidence was not used successfully. Accordingly, the probe-based estimand and its empirical plug-in estimator are:
\begin{equation*}
\mu_\phi^{\mathrm{probe}}
:= \mathbb{P}\bigl(M(X, Y_X)=1\mid\mathcal{H}\bigr),
\qquad
\widehat{\mu}_\phi
:= \frac{\sum_{i=1}^{n}{\mathbb{I}\{H(x_i)=1,M(x_i, y_i)=1\}}}
{\sum_{i=1}^{n}{\mathbb{I}\{H(x_i)=1\}}}.
\end{equation*}
A perfect probe satisfying $M(X,Y_X)=\phi_X$ for every $X$ makes the probe-based estimand equal $\mu_\phi$. In practice, however, $M$ is not trained on $\phi_X$, which is unobserved, but on labels indicating whether $Y$ is correct for $X$, so $M(X,Y_X)$ is only a proxy: it supports but does not prove $\phi_X$ in either direction, as a positive prediction may rely on imperfect or spurious cues, and a negative one may fail to extract present signal. To account for this uncertainty, we next derive bounds on $\mu_\phi$.

\subsection{Bounds with an Imperfect Probe}
\label{subsec:bounds_imperfect_probe}

Let $X_+=\psi_+(X)$ and $X_-=\psi_-(X)$ denote answer-preserving transformations (i.e., $Y_{X_+}=Y_X=Y_{X_-}$) designed to make the visual evidence relevant to the query more or less salient, respectively. For example, $\psi_+$ may highlight the relevant object with a bounding box, whereas $\psi_-$ may blur or partially occlude it. 
Consider the probe-dominance assumption (Assumption~\ref{asm:probe_dominance}), under which the probe's detection rate after the positive transformation $\psi_+$ and its non-detection rate after the negative transformation $\psi_-$ serve as conservative proxies for its behavior when the original representations genuinely contain or lack the required signal, respectively. Provided that $\mathbb{P}(M(X_+,Y_X)=1\mid\mathcal{H})>0$ and $\mathbb{P}(M(X_-,Y_X)=0\mid\mathcal{H})>0$, the following bounds hold:
\begin{theorem}[Visual-Utilization Failure Bounds. Proof in \S\ref{app:proof}]
\label{thm:vh_bounds}
\begin{equation*}
1-
\frac{\mathbb{P}(M(X,Y_X)=0\mid\mathcal{H})}
{\mathbb{P}(M(X_-,Y_X)=0\mid\mathcal{H})}
\le
\mu_\phi
\le
\frac{\mathbb{P}(M(X,Y_X)=1\mid\mathcal{H})}
{\mathbb{P}(M(X_+,Y_X)=1\mid\mathcal{H})}.
\end{equation*}
\end{theorem}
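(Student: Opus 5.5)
We decompose the observed detection rate on errors by the latent state $\phi_X$ and then control each conditional rate using Assumption~\ref{asm:probe_dominance}. Write $p_1 := \mathbb{P}(M(X,Y_X)=1\mid\mathcal{H})$ and $p_0 := 1-p_1$. Also set $\alpha := \mathbb{P}(M(X,Y_X)=1\mid \phi_X=1,\mathcal{H})$, the probe's sensitivity on errors that genuinely contain the signal, and $\beta := \mathbb{P}(M(X,Y_X)=0\mid \phi_X=0,\mathcal{H})$, its specificity on errors that lack it. By the law of total probability over $\phi_X$ given $\mathcal{H}$,
\begin{equation*}
p_1 = \alpha\,\mu_\phi + (1-\beta)(1-\mu_\phi),
\qquad
p_0 = (1-\alpha)\,\mu_\phi + \beta\,(1-\mu_\phi).
\end{equation*}

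\textbf{Upper bound.} From the first identity, dropping the nonnegative term $(1-\beta)(1-\mu_\phi)$ gives $p_1 \ge \alpha\,\mu_\phi$. I read the probe-dominance assumption as stating that the detection rate after the positive transformation is a conservative proxy for sensitivity, i.e.\ $\alpha \ge \mathbb{P}(M(X_+,Y_X)=1\mid\mathcal{H})$. This quantity is positive by hypothesis, so we may divide by it:
\begin{equation*}
\mu_\phi \le p_1/\alpha \le p_1/\mathbb{P}(M(X_+,Y_X)=1\mid\mathcal{H}).
\end{equation*}

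\textbf{Lower bound.} Symmetrically, from the second identity, dropping $(1-\alpha)\mu_\phi\ge 0$ gives $p_0 \ge \beta(1-\mu_\phi)$. The assumption's negative half, $\beta \ge \mathbb{P}(M(X_-,Y_X)=0\mid\mathcal{H})>0$, then yields
\begin{equation*}
1-\mu_\phi \le p_0/\mathbb{P}(M(X_-,Y_X)=0\mid\mathcal{H}),
\end{equation*}
and rearranging gives the stated lower bound. If either ratio falls outside $[0,1]$, the corresponding bound holds trivially, so no extra case analysis is required.

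\textbf{Main obstacle.} The main obstacle is interpretive rather than computational. Assumption~\ref{asm:probe_dominance} must be matched to exactly the two inequalities $\alpha \ge \mathbb{P}(M(X_+,Y_X)=1\mid\mathcal{H})$ and $\beta \ge \mathbb{P}(M(X_-,Y_X)=0\mid\mathcal{H})$, with all probabilities conditioned on $\mathcal{H}$ for the original inputs. The intuition is that $\psi_+$ can only increase salience, so detection on $X_+$ is at most as easy as detection on inputs that genuinely carry the signal; likewise, $\psi_-$ degrades evidence, so non-detection on $X_-$ is at most the specificity on inputs that truly lack it. If the assumption is instead phrased with respect to $\phi_{X_+}$ or $\phi_{X_-}$, I would first check that the answer-preserving property ($Y_{X_\pm}=Y_X$) lets the transformed and original conditional rates be compared on the same event $\mathcal{H}$, and then apply the two steps above verbatim.
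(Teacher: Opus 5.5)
Your proposal is correct and is essentially the paper's proof: both decompose $\mathbb{P}(M(X,Y_X)=0\mid\mathcal H)$ and $\mathbb{P}(M(X,Y_X)=1\mid\mathcal H)$ over $\phi_X$, drop the nonnegative cross term, and apply Assumption~\ref{asm:probe_dominance}, which the paper states in exactly the form you anticipated (conditioned on $\mathcal H$ for the original input, with $\phi_X$ rather than $\phi_{X_\pm}$). One small point the paper also leaves implicit: $\alpha$ is undefined when $\mu_\phi=0$ and $\beta$ when $\mu_\phi=1$, but in each case the bound that would use the undefined quantity holds trivially.
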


\subsection{Visual Evidence in Object-Attribute Queries}
\label{subsec:visual_evidence_object_attribute}
We now specialize the probe to queries $Q$ about an attribute of a target object, such as `\emph{What is the \underline{color} of the \underline{car}?}'', where the target object is \emph{``car''} and the attribute category is \emph{``color''}. Correct answering requires two components of visual evidence: \emph{object visual evidence}, indicating that the representation localizes the target object, and \emph{attribute visual evidence}, indicating that the object-specific representation supports the correct attribute value more strongly than alternative values from the same category. We define $M$ as a composite probe:
\begin{equation*}
M(X,Y)=1
\quad\Longleftrightarrow\quad
M_{\mathrm{obj}}(X)=1
\ \land
M_{\mathrm{attr}}(X,Y)=1.
\end{equation*}
Under this operational definition, for the ground-truth answer $Y_X$, at least one required evidence component is not detected when $M(X,Y_X)=0$, whereas both are detected when $M(X,Y_X)=1$. In the next section, we present our implementation of both probes.

\begin{figure}[t]
    \centering
    \includegraphics[width=\linewidth]{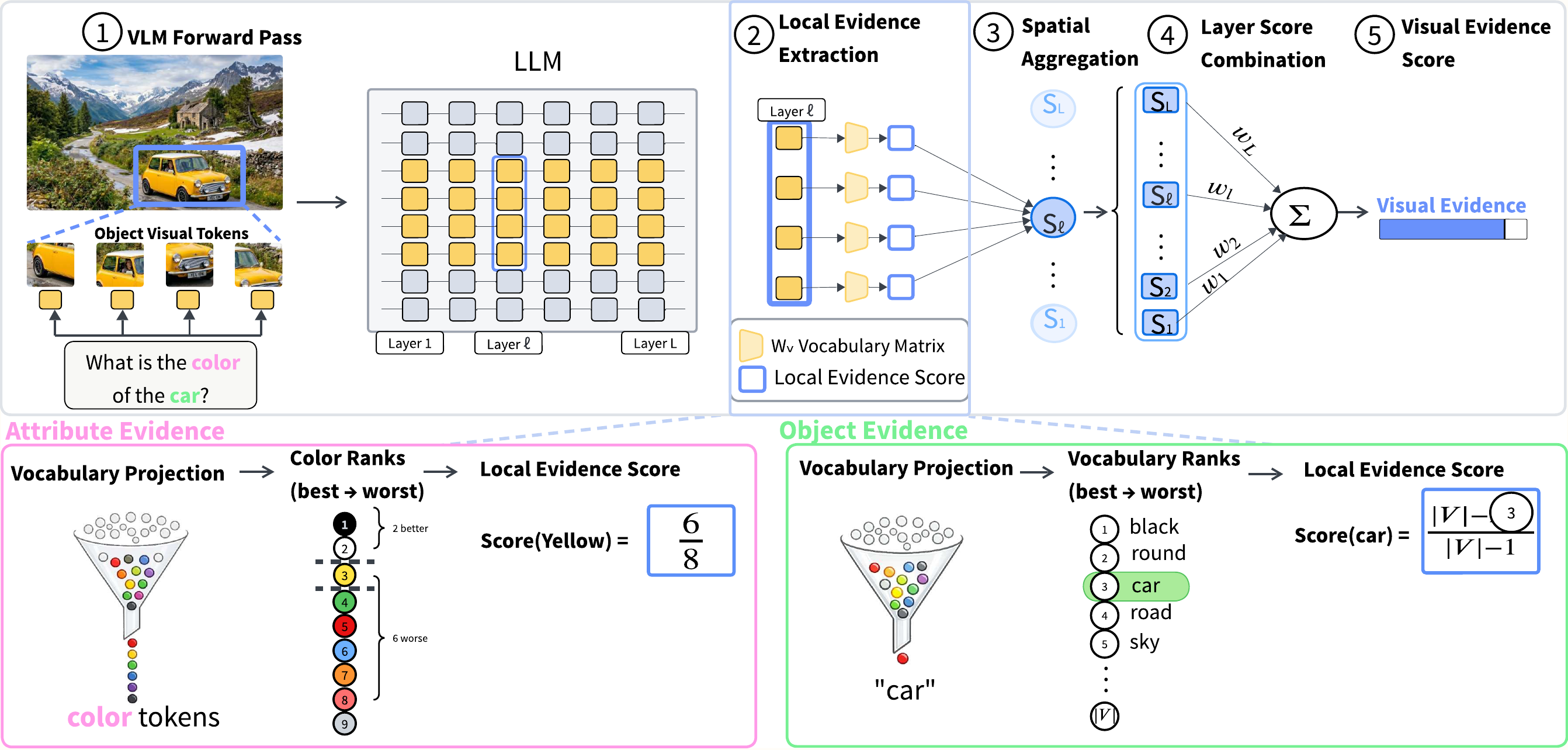}
    \vspace{-0.8em}
    \caption{\textbf{Overview of \method.} \emph{Top:} Visual-token representations from each LLM layer are projected through the output head $W_V$ to obtain vocabulary rankings. These rankings are converted into local evidence scores and aggregated into one score per layer. The layer-wise scores serve as input to a logistic-regression classifier. \emph{Bottom:} For each visual token, the vocabulary ranks of target tokens ($\mathcal{T}^+$) and competing tokens ($\mathcal{T}^-$) are compared to produce a local evidence score.}
    \label{fig:method_overview}
    \vspace{-1.0em}
\end{figure}

\section{The Visual Evidence Lens Method}
\label{sec:method}

We now introduce \method, a \textit{rank-based} implementation of the composite probe $M=M_{\mathrm{obj}}\land M_{\mathrm{attr}}$ (\S\ref{subsec:visual_evidence_object_attribute}) that projects intermediate visual-token representations into the LM's vocabulary space and traces the resulting evidence across layers (Figure~\ref{fig:method_overview}). As we show experimentally, \method yields the most reliable probe and hence our most reliable estimate of $\mu_\phi$.

\subsection{Method Overview and Notation}
\label{subsec:method_overview}
For an input $X=(I,Q)$, the vision encoder transforms the image $I$ into patch-level representations. The vision-to-language projection then maps these representations into the language model's embedding space, producing visual tokens $k_1,\ldots,k_n$. In the autoregressive VLMs considered here, as in most current VLMs, visual tokens precede the question tokens; causal masking therefore prevents their layer-wise representations from attending to $Q$, so they encode only information available from the image. Let $h_l(k_i)\in\mathbb R^d$ denote the representation of visual token $k_i$ at language-model layer $l\in\{1,\ldots,L\}$.

To examine the visual evidence encoded in $h_l(k_i)$, we apply the LM's output head $W_V\in\mathbb R^{|V|\times d}$, obtaining logits over the vocabulary $V$ and hence a ranking of its tokens, following the logit-lens approach \citep{nostalgebraist2020logitlens} (\S\ref{app_sub:logit_lens}). For each $v\in V$, let $\operatorname{rank}_l(k_i,v)$ denote its projected rank, from one (highest scoring) to $|V|$ (lowest scoring). These rankings reveal which vocabulary concepts are most strongly supported by each visual-token representation across LM layers. We use this vocabulary-based lens to quantify visual object and attribute evidence.

Following the example from Section~\ref{subsec:visual_evidence_object_attribute} for the question $Q=$ ``\emph{What is the \underline{color} of the \underline{car}?}'', let $o$ denote the target object (\emph{``car''}), $b$ its annotated bounding box, and $Y$ the target attribute value (\emph{``yellow''}). To evaluate attributes, we use the full set of values in the queried category as the closed candidate set $\mathcal A$ (e.g., $\mathcal A=\{\textrm{yellow},\textrm{red},\textrm{blue}\}$ for \emph{color}); all values other than $Y$ serve as alternatives. Let $\mathcal K_{\mathrm{in}}$ and $\mathcal K_{\mathrm{out}}$ the sets of visual tokens whose corresponding image patches lie inside and outside $b$, respectively.

\subsection{Visual-Evidence Extraction}
\label{subsec:visual_evidence_extraction}

To measure visual evidence for a specific concept, we define target vocabulary tokens $\mathcal T^+$ that represent it and competing tokens $\mathcal T^-$.
For instance, when probing for the target attribute $Y=$ \emph{``yellow''}, $\mathcal T^+$ contains vocabulary tokens for ``yellow'', while $\mathcal T^-$ contains tokens for the alternative candidates in $\mathcal A$ (e.g., ``red'', ``blue''). The exact construction of $\mathcal T^+$ and $\mathcal T^-$ differs for objects and attributes; detailed definitions for each are provided in \S\ref{app_sub:tokens_constraction}.
For visual token $k_i$ and layer $l$, we first take the best rank across the variants in the target token set, $r_l(k_i,\mathcal T^+):=\min_{v\in\mathcal T^+}\operatorname{rank}_l(k_i,v)$, and then compare it with the ranks of the competing tokens:
\begin{equation}
    s_l(k_i)
    :=\frac{1}{|\mathcal T^-|}
      \sum_{u\in\mathcal T^-}
      \mathbb I\!\left\{
          \operatorname{rank}_l(k_i,u)>r_l(k_i, \mathcal T^+)
      \right\}.
    \label{eq:patch_evidence_score}
\end{equation}
With ranks ordered from best to worst starting at one, the resulting local evidence score $s_l(k_i)\in[0,1]$ is the fraction of competing tokens that appear later in this ordering than the best-ranked target token; larger values therefore indicate stronger relative visual evidence at $k_i$.
To obtain a layer-level score, we summarize the local scores $\{s_l(k_i):k_i\in\mathcal K\}$ over a specific spatial region $\mathcal K$ (e.g., $\mathcal K_{\mathrm{in}}$ or $\mathcal K_{\mathrm{out}}$) using the aggregation strategies defined in \S\ref{app:aggregation}. This produces a layer-level score $S_l$; stacking the scores from all $L$ layers yields an evidence vector $\mathbf S := [S_1,\ldots,S_L]^\top\in\mathbb R^L$.

\subsection{Probe Supervision}
\label{subsec:probe_supervision}
To determine the final presence of visual evidence, we train lightweight logistic-regression probes on the extracted $L$-dimensional evidence vectors. Both probes are framed as binary classifiers over $\mathbf S \in \mathbb R^L$, differing only in how the positive ($z=1$) and negative ($z=0$) examples are sourced.

\paragraph{Attribute-Evidence Regression.}
For the attribute visual probe, we extract the evidence vector $\mathbf S^{\mathrm{attr}}(Y) \in \mathbb R^L$ over the target object's region $\mathcal K_{\mathrm{in}}$ for a candidate attribute $Y$. Since visual evidence is not directly observable, we supervise the probe using dataset annotations: each evidence vector is paired with $z=1$ when $Y$ is annotated as present and $z=0$ when it is annotated as absent. We train a logistic-regression model $\mathrm{LR}_{\mathrm{attr}}:\mathbb R^L\rightarrow[0,1]$ on these labeled vectors and apply it to the target-attribute vector $\mathbf S^{\mathrm{attr}}(Y)$:
\begin{equation}
    M_{\mathrm{attr}}(X, Y)
    :=\mathbb I\!\left\{
       \mathrm{LR}_{\mathrm{attr}}
       \bigl(\mathbf S^{\mathrm{attr}}(Y)\bigr)
       \geq\tau_{\mathrm{attr}}
    \right\}.
    \label{eq:attribute_probe}
\end{equation}
Thus, $M_{\mathrm{attr}}(X, Y)=1$ when the layer-wise visual evidence for $Y$ is classified as a present attribute.

\paragraph{Object-Evidence Regression.}
For the object visual probe, we instead use spatial regions for supervision. The inside-region vector $\mathbf S^{\mathrm{obj,in}}\in\mathbb R^L$ is a positive example ($z=1$), while the outside-region vector $\mathbf S^{\mathrm{obj,out}}\in\mathbb R^L$, where the object is absent, is negative ($z=0$). We fit $\mathrm{LR}_{\mathrm{obj}}:\mathbb R^L\rightarrow[0,1]$ to distinguish the regions and define the object probe as:
\begin{equation}
    M_{\mathrm{obj}}(X)
    :=\mathbb I\!\left\{
       \mathrm{LR}_{\mathrm{obj}}\bigl(\mathbf S^{\mathrm{obj,in}}\bigr)
       \geq\tau_{\mathrm{obj}}
    \right\}.
    \label{eq:object_probe}
\end{equation}

\section{Experimental Setup}
\label{sec:experimental}
\paragraph{Baseline Methods.}
We compare our \textit{rank-based} method with three baseline categories: \textit{classifier-based}, \textit{confidence-based}, and \textit{attention-based}. All keep the VLM frozen and use the same mapping from annotated object regions to visual tokens.

\emph{Classifier-based.}
A standard linear-probing baseline \citep{DBLP:conf/iclr/AlainB17} trains a separate linear classifier for each attribute value at each LM layer. Target-region visual tokens are spatially aggregated (\S\ref{app:aggregation}) into a full-hidden-dimension vector and used as classifier input. This approach requires knowing the attribute or object space in advance, as each value has its own trained probing classifier. Consequently, it cannot be applied to unseen attributes or used as an object probe.

\emph{Confidence-based.}
Internal Confidence \citep{DBLP:conf/iclr/JiangKPG25} measures whether a concept is represented at an intermediate layer by projecting the representation into vocabulary space, and examining the probability assigned to the tokens corresponding to that concept. The method takes the maximum probability across tokens and layers and passes the scalar to a binary classifier. Unlike \method, it does not compare to scores of competing tokens and collapses layers into a single scalar. In our evaluation, we restrict the visual tokens to the annotated target-object region.

\emph{Attention-based.}
High attention between a query token representing a concept and visual tokens corresponding to that concept can indicate stronger visual grounding \citep{DBLP:conf/cvpr/KaduriBD25,DBLP:conf/cvpr/JiangCZLSY25,DBLP:conf/eccv/LiuZC24}. In our implementation, we measure attention from the candidate attribute or target-object query tokens to visual tokens inside and outside the annotated target-object region. At each layer and head, the attention scores are normalized by region size, aggregated over heads within each layer, concatenated across layers, and passed to a binary classifier (\S\ref{app_sub:attention_baseline}).

\paragraph{Pooling Configurations.}
For attribute evidence, \method and the confidence and attention baselines use two configurations: \emph{All}, one probe shared across all values in a category, and \emph{Separate} (Sep), one probe per value. The classifier baseline always trains one probe per value, so it has only Separate. For object evidence, the methods use one estimator shared across object categories. We select the aggregation strategy of scores (mean, max, min or median) on a validation subset (\S\ref{app_sub:dev_set_results}), retrain on the full training set, and evaluate once on held-out test set.

\paragraph{Base Models.}
We evaluate two VLMs: LLaVA-1.6-7B \cite{DBLP:conf/cvpr/LiuLLL24}, which connects a pretrained CLIP vision encoder to a Mistral-7B LM, and Qwen2.5-VL-7B \cite{DBLP:journals/corr/abs-2502-13923}, which pairs the Qwen2.5 LM with a natively trained dynamic-resolution vision encoder.

\paragraph{Dataset.}
We use the Visual Attributes in the Wild (VAW) dataset \citep{DBLP:conf/cvpr/PhamK0DCTS21}, which provides each object with a polygon or bounding box and associated attributes. This allows us to identify the target object's visual region and construct object-attribute questions (\S\ref{app_subsec:data preprocessing}).
VAW contains natural, object-rich scenes with over 17 objects per image on average.
We normalize attribute values to canonical forms (\S\ref{app_subsec:data preprocessing}) and sample 6,074 images.
We focus on color, material, and shape because they can be determined from the target object itself rather than its surrounding context, making them well suited for identifying visual-representation failures. \method itself is not tied to this setting: \S\ref{app:extensions} discusses its scope and limitations.

\section{Results}

\subsection{Evaluating the Visual-Evidence Probe}
\label{subsec:evaluating_visual_evidence_probe}

We estimate the visual-utilization failure rate as the fraction of errors, $H(X)=1$, for which $M(X,Y_X)=1$: the VLM answers incorrectly although both required evidence components can be extracted from its representations. Because this estimate's reliability depends on how well $M$ detects supporting visual evidence, we first compare our rank-based probe with the baselines. We report ROC-AUC here and additional metrics and aggregation strategies in \S\ref{app_sub:dev_set_results}.

\begin{table}[t]
  \centering
  \caption{\textbf{Visual-evidence detection performance.} ROC-AUC for attribute, object, and combined probes across methods and VLMs.}
  \label{tab:evidence_detection_auc}
  \begin{adjustbox}{width=0.98\textwidth}
  \begin{tabular}{ll|ccccc|ccccc}
  \toprule
  \cellcolor{Gray!15}
  & \cellcolor{Gray!15}
  & \multicolumn{5}{c|}{\cellcolor{Gray!40}\textbf{LLaVA}}
  & \multicolumn{5}{c}{\cellcolor{Gray!40}\textbf{Qwen}} \\
  \cellcolor{Gray!15} $M_{\mathrm{attr}}$
  & \cellcolor{Gray!15} $M_{\mathrm{obj}}$
  & \cellcolor{Gray!15} \shortstack{Color\\{\tiny $N=1{,}280$}}
  & \cellcolor{Gray!15} \shortstack{Mater\\{\tiny $N=1{,}048$}}
  & \cellcolor{Gray!15} \shortstack{Shape\\{\tiny $N=800$}}
  & \cellcolor{Gray!15} \shortstack{Object\\{\tiny $N=2{,}941$}}
  & \cellcolor{Gray!15} \shortstack{Attr$\times$Obj\\{\tiny $N=3{,}128$}}
  & \cellcolor{Gray!15} \shortstack{Color\\{\tiny $N=1{,}280$}}
  & \cellcolor{Gray!15} \shortstack{Mater\\{\tiny $N=1{,}048$}}
  & \cellcolor{Gray!15} \shortstack{Shape\\{\tiny $N=800$}}
  & \cellcolor{Gray!15} \shortstack{Object\\{\tiny $N=2{,}941$}}
  & \cellcolor{Gray!15} \shortstack{Attr$\times$Obj\\{\tiny $N=3{,}128$}} \\
  \midrule
  Rank-All
  & Rank
  & 0.95 & 0.91 & 0.84 & 0.90 & 0.90
  & 0.93 & 0.87 & 0.82 & 0.81 & 0.86 \\
  Rank-Sep
  & Rank
  & \textbf{0.97} & \textbf{0.97} & \textbf{0.91} & 0.90 & 0.94
  & \textbf{0.94} & \textbf{0.94} & \textbf{0.87} & 0.81 & 0.90 \\
  Conf-All
  & Conf
  & 0.79 & 0.83 & 0.64 & 0.53 & 0.78
  & 0.78 & 0.82 & 0.67 & 0.58 & 0.77 \\
  Conf-Sep
  & Conf
  & 0.79 & 0.83 & 0.64 & 0.53 & 0.80
  & 0.78 & 0.82 & 0.67 & 0.58 & 0.78 \\
  Classifier
  & --
  & 0.93 & 0.94 & 0.83 & -- & --
  & 0.92 & \textbf{0.94} & 0.86 & -- & -- \\
  Attention-All
  & Attention
  & 0.81 & 0.75 & 0.67 & \textbf{1.00} & 0.76
  & 0.82 & 0.77 & 0.72 & \textbf{0.96} & 0.78 \\
  Attention-Sep
  & Attention
  & 0.83 & 0.78 & 0.69 & \textbf{1.00} & 0.78
  & 0.82 & 0.77 & 0.73 & \textbf{0.96} & 0.78 \\
  \midrule
  \rowcolor{Gray!15}
  \textbf{Rank-Sep}
  & \textbf{Attention}
  & \textbf{0.97} & \textbf{0.97} & \textbf{0.91} & \textbf{1.00} & \textbf{0.95}
  & \textbf{0.94} & \textbf{0.94} & \textbf{0.87} & \textbf{0.96} & \textbf{0.91} \\
  \bottomrule
  \end{tabular}
  \end{adjustbox}
  \end{table}

\paragraph{Our Method Provides the Strongest Attribute Evidence Detection.} 
Table~\ref{tab:evidence_detection_auc} compares the performance of probes across attribute, object, and combined visual evidence detection. For attribute evidence detection, Rank-Separate performs best in all three categories on both LLaVA and Qwen. The Attention baseline is strongest for object evidence, with AUCs of 1.00 on LLaVA and 0.96 on Qwen; our Rank-based is second at 0.90 and 0.81, respectively. Combining Rank-Separate as $M_{\mathrm{attr}}$ with Attention as $M_{\mathrm{obj}}$ gives the best combined AUC: \textbf{0.95} on LLaVA and \textbf{0.91} on Qwen.

\paragraph{Our Method Generalizes with Limited Supervision and to Unseen Attribute Values.} 
For $M_{\mathrm{attr}}$, \method uses two pooling configurations: \emph{Rank-Separate} (Sep) trains one probe per value, while \emph{Rank-All} shares one across values. Our main results evaluate attribute values seen during training, where Rank-Sep performs best among methods. We next ask how the probe can be extended to a new value. For example, after training on seven colors, how can it detect evidence for an eighth? We consider two options: train a value-specific probe from a few labeled examples (low-resource), or reuse probes trained on known values for out-of-distribution generalization.

In the low-resource setting, we train a value-specific Separate probe with varying numbers of labeled examples. Figure~\ref{fig:low_resource_unseen_values} (left) reports ROC-AUC averaged over five independently sampled subsets for each training size in each of the three attribute categories. \method outperforms the baselines at every size and remains strong with only a few dozen examples per class. For the unseen-value setting, we hold out one value, train on the remaining $n-1$, and compute its evidence vector. 
Separate applies the $n-1$ value-specific probes to this vector and averages their predictions, while All applies one probe trained jointly on the $n-1$ values. The middle and right panels show that Rank-All generalizes better to unseen values, while both configurations outperform their baselines. Thus, specialization is preferable in-domain, whereas sharing across values improves transfer: a new value can be handled with a few labeled examples using Separate, or with All when no labels are available.


\begin{figure}[t]
    \centering
    \includegraphics[width=\linewidth]{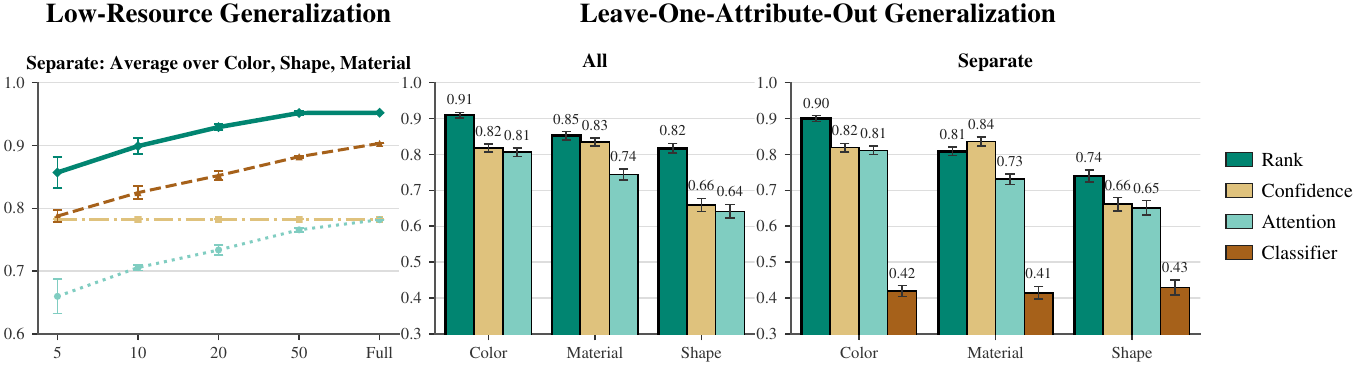}
    \vspace{-1.5em}
    \caption{\textbf{Low-resource and unseen-value generalization.}
    \emph{Left:} ROC-AUC as the number of labeled examples per class varies, using Separate and averaging across color, material, and shape, with five independently sampled subsets for each size and category.
    \emph{Middle and right:} Leave-one-value-out ROC-AUC, averaged over held-out values, using both pooling configurations. For Separate, individual probes are trained on each seen value, and their predictions are averaged for the held-out value. For All, one probe is trained jointly on all seen values and applied to the held-out value. Error bars are computed via 100 bootstraps.}
    \label{fig:low_resource_unseen_values}
\end{figure}

\begingroup
\raggedbottom

\begin{figure*}[t]
\centering
\setlength{\abovecaptionskip}{3pt}

\newcommand{\failureestimaterow}[6]{%
  \pgfmathsetmacro{\representationestimate}{100-#3}%
  \pgfmathsetmacro{\redcenter}{#3/2}%
  \pgfmathsetmacro{\graycenter}{(#3+100)/2}%
  \pgfmathsetmacro{\boundleft}{#4}%
  \pgfmathsetmacro{\boundright}{#5}%

  \node[
    anchor=east,
    font=\small
  ] at (-3,#2) {#1};

  \fill[utilizationfailurered]
    (0,{#2-0.30}) rectangle
    (#3,{#2+0.30});

  \fill[visualfailuregray]
    (#3,{#2-0.30}) rectangle
    (100,{#2+0.30});

  \draw[
    black!70,
    line width=#6
  ]
    (0,{#2-0.30}) rectangle
    (100,{#2+0.30});

  \draw[
    black!80,
    line width=0.95pt
  ]
    (#3,{#2-0.30}) --
    (#3,{#2+0.30});

  \draw[
    failureboundgray,
    line width=0.65pt
  ]
    (\boundleft,{#2-0.48}) --
    (\boundright,{#2-0.48});
  \draw[failureboundgray,line width=0.65pt]
    (\boundleft,{#2-0.57}) -- (\boundleft,{#2-0.39})
    (\boundright,{#2-0.57}) -- (\boundright,{#2-0.39});
  \fill[failureboundgray]
    (#3,{#2-0.48}) circle[radius=0.75pt];

  \node[
    font=\fontsize{7.2}{7.6}\selectfont
  ] at (\redcenter,#2)
    {\textbf{#3\%}};

  \node[
    font=\fontsize{6.2}{6.6}\selectfont
  ] at (\graycenter,#2)
    {%
      \textbf{%
        \pgfmathprintnumber[
          fixed,
          fixed zerofill,
          precision=1
        ]{\representationestimate}\%
      }
    };

  \node[
    font=\fontsize{5.8}{6.2}\selectfont,
    anchor=north,
    inner sep=0pt,
    text=black!80
  ] at (\redcenter,{#2-0.40})
    {[#4--#5]\%};
}

\newcommand{\failurecolumnheaders}{%
  \node[
    anchor=south west,
    font=\fontsize{7.5}{8}\selectfont,
    align=left,
    inner sep=0pt,
    text=utilizationfailurered
  ] at (0,4.95)
    {Utilization\\failures};

  \node[
    anchor=south east,
    font=\fontsize{7.5}{8}\selectfont,
    align=right,
    inner sep=0pt,
    text=black!75
  ] at (100,4.95)
    {Representation\\failures};
}

\begin{minipage}[t]{\textwidth}
  \vspace{0pt}
  \centering

  \textbf{(a) Controlled transformations}\par
  \vspace{-0.2em}

  \includegraphics[width=\linewidth,trim=0 84 0 0,clip]{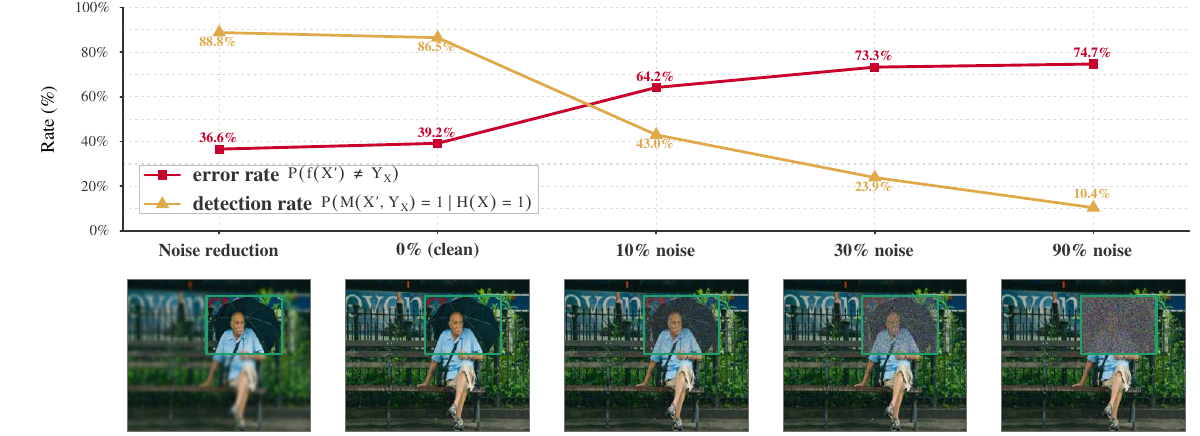}\\[0pt]
  \includegraphics[width=\linewidth,trim=0 12 0 132,clip]{figures/noise_failure_evidence_with_images.pdf}
\end{minipage}


\begin{minipage}[t]{\textwidth}
  \vspace{0pt}
  \centering

  \textbf{(b) Estimated error sources}\par
  \vspace{0.1em}

  \resizebox{\linewidth}{!}{%
    \begin{tikzpicture}[
      x=0.55mm,
      y=5.5mm,
      font=\sffamily
    ]

      \begin{scope}[xshift=1.1cm]

        \node[
          font=\normalsize\bfseries
        ] at (50,5.55)
          {LLaVA};

        \failurecolumnheaders

        \failureestimaterow
          {\textbf{Overall}}
          {4.45}
          {84.4}
          {77.5}
          {95.4}
          {1.05pt}

        \failureestimaterow
          {Color}
          {2.95}
          {86.5}
          {82.2}
          {97.3}
          {0.35pt}

        \failureestimaterow
          {Material}
          {1.45}
          {84.1}
          {74.7}
          {93.6}
          {0.35pt}

        \failureestimaterow
          {Shape}
          {-0.05}
          {81.3}
          {71.3}
          {94.0}
          {0.35pt}

      \end{scope}

      \begin{scope}[xshift=8.3cm]

        \node[
          font=\normalsize\bfseries
        ] at (50,5.55)
          {Qwen};

        \failurecolumnheaders

        \failureestimaterow
          {\textbf{Overall}}
          {4.45}
          {70.0}
          {61.0}
          {96.6}
          {1.05pt}

        \failureestimaterow
          {Color}
          {2.95}
          {67.9}
          {59.3}
          {94.6}
          {0.35pt}

        \failureestimaterow
          {Material}
          {1.45}
          {73.2}
          {60.9}
          {97.9}
          {0.35pt}

        \failureestimaterow
          {Shape}
          {-0.05}
          {69.5}
          {62.5}
          {97.5}
          {0.35pt}

      \end{scope}

    \end{tikzpicture}%
  }
\end{minipage}

\caption{\emph{Top:} The object region is emphasized by blurring its surroundings (leftmost) or degraded by Gaussian noise, yielding the transformed image $X'$ (\S\ref{app:controlled_transformations}). Lines report the LLaVA error rate (red) and joint visual-evidence detection among the original errors $\mathbb{P}(M(X',Y_X)\!=\!1\!\mid\! H(X)\!=\!1)$ (yellow). \emph{Bottom:} \method estimates of visual-utilization (red) and visual-representation (gray) failures. The lines and bracketed values are the utilization-failure rate bounds.}
  \label{fig:failure_analysis}
\vspace{-0.8em}
\end{figure*}
\endgroup

\subsection{Utilization Failures Are the Main Bottleneck}
\label{subsec:utilization_bottleneck}
We now ask whether VLM errors arise because the required evidence is missing (\emph{visual-representation failure}) or encoded but unused (\emph{visual-utilization failure})? Answering this question reveals where the current bottleneck lies and where future interventions should focus. For errors $H(X)=1$, the fraction with $M(X,Y_X)=1$ (where both attribute and object visual evidence are detected) estimates the utilization-failure rate, $\widehat{\mu}_\phi=\mathbb{P}(M(X,Y_X)=1\mid H(X)=1)$. Because $M$ only approximates latent state $\phi_X$, we report the bounds (Section~\ref{subsec:bounds_imperfect_probe}) computed with answer-preserving transformations. We first verify that these transformations affect the probe as expected.

\paragraph{\method Responds Predictably to Changes in Visual Evidence.}
Figure~\ref{fig:failure_analysis} (top) illustrates the transformations used to construct the bounds. The negative transformation $\psi_-(X)$ adds noise to the target object, while the positive transformation $\psi_+(X)$ emphasizes it by blurring its surroundings. For LLaVA, increasing noise from 0\% to 90\% progressively decreases evidence detection among the original errors from \textbf{86.5\%} to \textbf{10.4\%}, while the overall error rate increases from \textbf{39.2\%} to \textbf{74.7\%}. Emphasizing the object produces the reverse pattern: evidence detection increases to \textbf{88.8\%}, while the error rate decreases to \textbf{36.6\%}. Thus, \method responds predictably to changes in the relevant visual signal. Using the validation set, we select \textbf{30\% noise} as $\psi_-(X)$, as it weakens the signal while preserving the ground-truth answer. The positive transformation yields only a modest improvement, making the upper bound less informative. Future work could explore stronger answer-preserving positive transformations to obtain tighter upper bounds.

\paragraph{Visual-Utilization Failures Dominate VLM Errors.}
Figure~\ref{fig:failure_analysis} (bottom) reports $\widehat{\mu}_\phi$ overall and by attribute for both models. The estimated utilization-failure rate is 84.4\% for LLaVA and 70.0\% for Qwen, with lower bounds of 77.5\% and 61.0\%, respectively, and the pattern is consistent across attributes. 
Together, these findings provide a clear, consistent picture: in the evaluated setting, the main bottleneck is properly using visual information during generation, not encoding it.

\subsection{Tracing Visual Evidence and Improving Its Utilization}
\label{subsec:tracing_and_improving_evidence}

Having established that most errors occur despite successfully encoded visual evidence, we next ask two questions: where does this evidence become accessible within the model, and can we improve its utilization once it is encoded? \method naturally supports both analyses: applying it to layers traces how evidence develops across the model, while its final prediction identifies errors in which the required evidence is encoded.

\paragraph{Visual Evidence Emerges Differently Across Attributes and Models.} 
Although \method combines scores across all layers, we train per-layer classifiers using each layer's evidence score separately to trace evidence accessibility with depth. Figure~\ref{fig:layerwise_visual_evidence} shows that evidence generally strengthens toward upper layers, consistent with prior work finding semantic information increasingly accessible in middle-to-late layers \citep{DBLP:conf/iclr/NeoO0G0B25,DBLP:conf/iclr/JiangKPG25,DBLP:conf/cvpr/JiangCZLSY25}. However, trajectories vary across attributes and models. In Qwen, material evidence is already strong in early layers, whereas color and object evidence rise sharply near the end; in LLaVA, object evidence increases more gradually. These results indicate that visual evidence does not become accessible at a universal layer. Interventions aimed at improving visual utilization may therefore need to account for these layer-wise dynamics rather than targeting a fixed layer across models and concepts.

\paragraph{Evidence-Positive Errors Are More Easily Corrected.}
Distinguishing visual-representation from visual-utilization failures is important, as they call for different interventions. Since our results identify visual utilization as the primary bottleneck, we test whether strengthening access to encoded visual evidence can recover incorrect answers. Guided by the absence of a universal evidence layer, we apply an attention intervention at every layer that increases attention from each text token to visual tokens inside the annotated target-object region (\S\ref{app_sub:attention-boost}). Because this requires a known object region, it serves as a proof of concept rather than a standalone correction method. Figure~\ref{fig:attention_redirection_results} shows an example where \method detects the evidence and the intervention corrects the answer.

\begin{wrapfigure}{r}{0.5\textwidth}
    \centering
    \vspace{-2.5em}
    \includegraphics[width=0.98\linewidth,trim=0 0 0 11,clip]{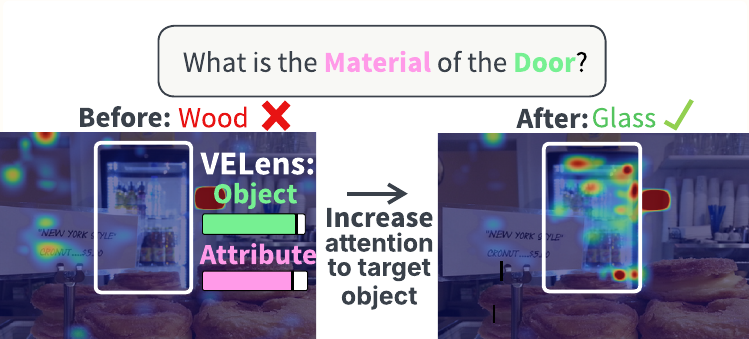}
    \setlength{\abovecaptionskip}{2pt}
    \vspace{-1.0em}
    \caption{An example of evidence-positive error, where \method detects both object and attribute evidence. Increasing attention to the target-object region changes the answer from \emph{wood} to \emph{glass}. More examples in Figure~\ref{fig:app_attention_redirection}.}
    \label{fig:attention_redirection_results}
\end{wrapfigure}
We apply the same intervention to evidence-positive errors, $M(X,Y_X)=1$, and evidence-negative errors, $M(X,Y_X)=0$, obtaining correction rates of 25.2\% and 12\%, respectively. The higher correction rate supports our diagnosis: the intervention is substantially more effective when the required evidence is encoded. This is encouraging because most errors are evidence-positive. It suggests that a large fraction of current errors may be addressed not by replacing the visual encoder, but by developing better mechanisms to locate, route,
and use evidence the model already represents.

\section{Discussion}
\label{sec:discussion}


In this work, we proposed \method, a rank-based probe that traces object and attribute visual evidence across the layers of a VLM. Using it to separate visual-representation from visual-utilization failures, we found that utilization failures account for the majority of errors in both models we studied, with conservative lower bounds of 77.5\% for LLaVA and 61.0\% for Qwen. Our analysis covers attribute questions over color, material, and shape. \method itself, however, operates on vocabulary-space rankings and needs only a target concept and competing alternatives, so it naturally extends to other question types: attribute verification questions, for instance, are answered from evidence for the object and the queried value, both of which \method already provides. More broadly, we see this work as a step toward interpretability of the concepts encoded in visual tokens, and we hope it can extend beyond the question types we study (\S\ref{app:extensions}) and guide model development rather than only describe model behavior: knowing whether the required evidence is encoded determines whether progress should come from a better visual encoder or from better mechanisms to locate, route, and use what the model already represents. 

\subsection*{AI Use Statement}

In this work, we used generative AI tools to implement methods and to reformat a dataset: Claude Opus 5 assisted in writing our code, and GPT-5.6 proposed the canonical attribute values to which we map VAW's fine-grained annotations in each category (Table~\ref{tab:app_canonical_map}). We have not used generative AI tools for generating data, developing theory or proofs, proposing hypotheses, designing or critiquing our methodology and experiments, or interpreting results. Translation and qualitative data analysis are not applicable to this work. Additionally, we used Claude Opus 5 and GPT-5.6 to draft and edit text, and Gemini 3.1 to generate the illustrative images in Figures~\ref{fig:TheGoal} and~\ref{fig:method_overview} (all other images are VAW examples). We have reviewed all AI-assisted work. In all instances, every suggestion or line of code generated by a model was carefully reviewed by the authors to ensure it aligned with our original intentions before being accepted. We take responsibility for the final content of this work, including text, claims or artifacts produced with the aid of generative AI.

\subsection*{Ethics Statement}

This is an interpretability study of publicly released, open-weight VLMs on the public VAW dataset, involving no human subjects, no personally identifying data, and no new data release.

\subsection*{Reproducibility Statement}

To ensure reproducibility, all code for \method, the baselines, and the evaluation pipeline will be made publicly available. Our experiments use two open-weight VLMs, LLaVA-1.6-7B and Qwen2.5-VL-7B, applied frozen, and the publicly available VAW dataset. Data filtering, normalization, and question construction are specified in \S\ref{app_subsec:data preprocessing}, with the exact prompts in Figure~\ref{fig:app_prompt}; the baselines and their configurations in \S\ref{sec:experimental}, \S\ref{app_sub:attention_baseline}, and \S\ref{app_sub:logit_lens}; the design choices of \method, including token-set construction, aggregation strategies, and the development-set protocol used to select them before the single held-out test evaluation, in \S\ref{app_sub:tokens_constraction}, \S\ref{app:aggregation}, and \S\ref{app_sub:dev_set_results}; the controlled transformations underlying our bounds in \S\ref{app:controlled_transformations}; and a complete proof of Theorem~\ref{thm:vh_bounds} with its assumptions in \S\ref{app:proof}.

\bibliography{iclr2027_conference}
\bibliographystyle{iclr2027_conference}

\appendix
\section{Proof of the Visual-Utilization Failure Bounds}
\label{app:proof}

Here we formalizes the probe-dominance assumption used in Section~\ref{subsec:bounds_imperfect_probe} and proves Theorem~\ref{thm:vh_bounds}. Recall that $\phi_X=1$ indicates that the representations of $X$ contain sufficient signal to support the ground-truth answer. Conditional on an error, this latent state corresponds to a visual-utilization failure, whereas $\phi_X=0$ corresponds to a visual-representation failure. The fixed binary probe $M$ is the composite probe defined in Section~\ref{subsec:visual_evidence_object_attribute}; we apply the same probe to $X$, $X_+$, and $X_-$ with respect to the ground-truth answer $Y_X$. The positive transformation $X_+$ strengthens the relevant visual signal, while the negative transformation $X_-$ weakens it, without changing the ground-truth answer.

\begin{assumption}[Probe dominance]
\label{asm:probe_dominance}
Conditioned on an error, probe detection of $Y_X$ on $X_+$ is a conservative proxy for probe detection when the original representations contain the required signal. Similarly, probe non-detection of $Y_X$ on $X_-$ is a conservative proxy for probe non-detection when that signal is absent. Formally,
\begin{align*}
\mathbb{P}\bigl(M(X,Y_X)=1\mid\mathcal H,\phi_X=1\bigr)
&\geq \mathbb{P}\bigl(M(X_+,Y_X)=1\mid\mathcal H\bigr),\\
\mathbb{P}\bigl(M(X,Y_X)=0\mid\mathcal H,\phi_X=0\bigr)
&\geq \mathbb{P}\bigl(M(X_-,Y_X)=0\mid\mathcal H\bigr).
\end{align*}
\end{assumption}
The intuition behind this assumption is that the latent state $\phi_X$ directly describes the property that the probe is designed to detect, whereas $X_+$ and $X_-$ modify this property only indirectly and may do so imperfectly. If the original representations truly contain the required evidence ($\phi_X=1$), the probe should be at least as likely to detect it as it is after the positive transformation. Conversely, if the representations truly lack the required evidence ($\phi_X=0$), the probe should be at least as likely to return non-detection as it is after the negative transformation. Thus, the transformed inputs provide conservative proxies for the latent states.

\begin{proof}[Proof of Theorem~\ref{thm:vh_bounds}]
All probabilities below are conditioned on the error event $\mathcal H$. By definition,
\[
\mu_\phi=\mathbb{P}(\phi_X=1\mid\mathcal H),
\qquad
1-\mu_\phi=\mathbb{P}(\phi_X=0\mid\mathcal H).
\]

First consider the probability that the probe does not detect the required evidence. By the law of total probability,
\begin{align*}
\mathbb{P}\bigl(M(X,Y_X)=0\mid\mathcal H\bigr)
={}&
\mathbb{P}\bigl(M(X,Y_X)=0\mid\mathcal H,\phi_X=1\bigr)\mu_\phi\\
&+\mathbb{P}\bigl(M(X,Y_X)=0\mid\mathcal H,\phi_X=0\bigr)(1-\mu_\phi).
\end{align*}
The first term is nonnegative, and Assumption~\ref{asm:probe_dominance} lower-bounds the second conditional non-detection rate by the non-detection rate on $X_-$. Therefore,
\[
\mathbb{P}\bigl(M(X,Y_X)=0\mid\mathcal H\bigr)
\geq
\mathbb{P}\bigl(M(X_-,Y_X)=0\mid\mathcal H\bigr)(1-\mu_\phi).
\]
When $\mathbb{P}(M(X_-,Y_X)=0\mid\mathcal H)>0$, rearranging gives the lower bound
\[
\mu_\phi
\geq
1-
\frac{\mathbb{P}(M(X,Y_X)=0\mid\mathcal H)}
     {\mathbb{P}(M(X_-,Y_X)=0\mid\mathcal H)}.
\]

For the upper bound, apply the same decomposition to probe detection:
\begin{align*}
\mathbb{P}\bigl(M(X,Y_X)=1\mid\mathcal H\bigr)
={}&
\mathbb{P}\bigl(M(X,Y_X)=1\mid\mathcal H,\phi_X=1\bigr)\mu_\phi\\
&+\mathbb{P}\bigl(M(X,Y_X)=1\mid\mathcal H,\phi_X=0\bigr)(1-\mu_\phi).
\end{align*}
The second term is nonnegative, and Assumption~\ref{asm:probe_dominance} lower-bounds the first conditional detection rate by the detection rate on $X_+$. Hence,
\[
\mathbb{P}\bigl(M(X,Y_X)=1\mid\mathcal H\bigr)
\geq
\mathbb{P}\bigl(M(X_+,Y_X)=1\mid\mathcal H\bigr)\mu_\phi.
\]
When $\mathbb{P}(M(X_+,Y_X)=1\mid\mathcal H)>0$, rearranging yields
\[
\mu_\phi
\leq
\frac{\mathbb{P}(M(X,Y_X)=1\mid\mathcal H)}
     {\mathbb{P}(M(X_+,Y_X)=1\mid\mathcal H)}.
\]
Combining the two inequalities proves Theorem~\ref{thm:vh_bounds}.
\end{proof}

\section{\method Method: Complementary Details}

\subsection{Object and Attribute Token Set Construction}
\label{app_sub:tokens_constraction}

In Section~\ref{subsec:visual_evidence_extraction} we defined the local evidence score $s_l(k_i)$
(Eq.~\ref{eq:patch_evidence_score}) in terms of a target token set $\mathcal{T}^+$ and a competing
token set $\mathcal{T}^-$. Here we detail their construction, which differs for attributes and objects.

\paragraph{Attribute Token Sets.}
For the attribute probe, visual evidence is measured relatively within a category. Because language
model tokenizers are sensitive to capitalization and whitespace, we expand each concept into several
variants. For a concept string $w$ (e.g., ``yellow''), the variant set $\mathcal{T}(w)$ collects the
token IDs of its four variants:
\begin{equation}
    \mathcal{T}(\text{yellow}) = \{ \text{yellow}, \text{Yellow}, \text{\textvisiblespace yellow}, \text{\textvisiblespace Yellow} \}.
\end{equation}
Given the target attribute value $Y$ and the candidate set $\mathcal A$ of the queried category, the
variants of $Y$ form the target set and the variants of the alternative values form the competing set:
\begin{equation}
    \mathcal{T}^+_{\mathrm{attr}} := \mathcal{T}(Y), \quad 
    \mathcal{T}^-_{\mathrm{attr}} := \bigcup_{a \in \mathcal{A} \setminus \{Y\}} \mathcal{T}(a).
\end{equation}
With these sets, $s_l(k_i)$ is the fraction of alternative values in the same category ranked strictly
worse than the best-ranked variant of $Y$.

\paragraph{Object Token Sets.}
For the object probe there is no closed set of alternatives, so visual evidence is measured as
absolute vocabulary support. Using the same expansion for the target object $o$ (e.g., ``car''), its
variants form the target set and the entire remainder of the language model's vocabulary $V$ forms
the competing set:
\begin{equation}
    \mathcal{T}^+_{\mathrm{obj}} := \mathcal{T}(o), \quad 
    \mathcal{T}^-_{\mathrm{obj}} := V \setminus \mathcal{T}^+_{\mathrm{obj}}.
\end{equation}
Because $\mathcal{T}^-_{\mathrm{obj}}$ is the rest of the vocabulary, $s_l(k_i)$ becomes the fraction
of all vocabulary tokens that the best-ranked variant of $o$ outranks, quantifying how strongly
visual token $k_i$ supports $o$ compared to all other vocabulary tokens.

\subsection{Aggregation Strategies}
\label{app:aggregation}

In \method, the local evidence scores of Eq.~\ref{eq:patch_evidence_score} are summarized into a single layer-level score. Formally, for a set of visual tokens $\mathcal{K}$ (e.g., $\mathcal{K}_{\mathrm{in}}$ or $\mathcal{K}_{\mathrm{out}}$), we define the layer-level score $S_l$ at language-model layer $l$ as:
\begin{equation}
S_l = \mathrm{Agg}\big(\{s_l(k_i) : k_i \in \mathcal K\}\big),
\end{equation}
where $s_l(k_i)\in[0,1]$ is the local evidence score of visual token $k_i$ at layer $l$.

\paragraph{Aggregation Operators.}
We consider 4 different operators for $\mathrm{Agg}$:
\begin{itemize}
    \item \textbf{Mean:} 
    \[
    \mathrm{Agg} = \frac{1}{|\mathcal{K}|} \sum_{k_i \in \mathcal{K}} s_l(k_i)
    \]
    \item \textbf{Median:} 
    \[
     \mathrm{Agg} =\mathop{\mathrm{median}}_{k_i \in \mathcal{K}} s_l(k_i)
    \]
    \item \textbf{Minimum:} 
    \[
    \mathrm{Agg} = \min_{k_i \in \mathcal{K}} s_l(k_i)
    \]
    \item \textbf{Maximum:} 
    \[
    \mathrm{Agg} = \max_{k_i \in \mathcal{K}} s_l(k_i)
    \]
\end{itemize}

\paragraph{Attention Head Aggregation.}
The attention baseline (\S\ref{app_sub:attention_baseline}) aggregates over attention heads rather than over visual tokens, as its scores are defined per head and per concept token rather than per visual token. Within layer $l$, the concept $c$ being probed, either a candidate attribute or the target object, is represented in the query by the tokens $q_1,\ldots,q_m$, and every head $h\in\{1,\ldots,H\}$ assigns a separate score to each of them. We first average over these $m$ concept tokens, giving one score per head, and then apply $\mathrm{Agg}$ over the $H$ heads to obtain the layer-level score:
\begin{equation}
S_l = \operatorname{Agg}_{h=1}^{H}
\left(\frac{1}{m}\sum_{j=1}^{m} p_{l,h,j}(c)\right),
\label{eq:head_aggregation}
\end{equation}
where $p_{l,h,j}(c)$ denotes the per-head attention score of concept token $q_j$: the inside attention proportion $p_{l,h,j}^{\mathrm{in}}(a)$ for attributes (Eq.~\ref{eq:attribute_attention_aggregation}), and the region densities $d_{l,h,j}^{\mathrm{in}}(o)$ and $d_{l,h,j}^{\mathrm{out}}(o)$ for objects (Eq.~\ref{eq:object_attention_aggregation}). For this baseline we consider the mean, median, and maximum operators.

\paragraph{Discussion.}
These different aggregation strategies capture different aspects of the visual evidence distribution across visual tokens. The mean reflects overall support, the median provides robustness to outliers, and the minimum and maximum emphasize extreme responses.

\section{Experimental Setup: Complementary Details}
\label{app:experimental_setup}

\subsection{Attention Baseline}
\label{app_sub:attention_baseline}

The attention baseline follows the probe construction in
Section~\ref{sec:method}, but replaces vocabulary-rank evidence with
cross-modal attention. For each input $X=(I,Q)$, the query $Q$ names the
target object $o$ and lists the values in $\mathcal A$, the full set of candidate
values for the queried attribute category, as answer options. We map the
object annotation $b$ to the visual-token grid as
in Section~\ref{subsec:method_overview},
obtaining the inside and outside token sets $\mathcal K_{\mathrm{in}}$ and
$\mathcal K_{\mathrm{out}}$. We keep the model frozen and extract its
attention weights from a single forward pass.
Figure~\ref{fig:attention_baseline_example} illustrates the attention-ratio
computation for an object and a candidate attribute.

\begin{figure}[!htbp]
    \centering
    \includegraphics[width=\linewidth]{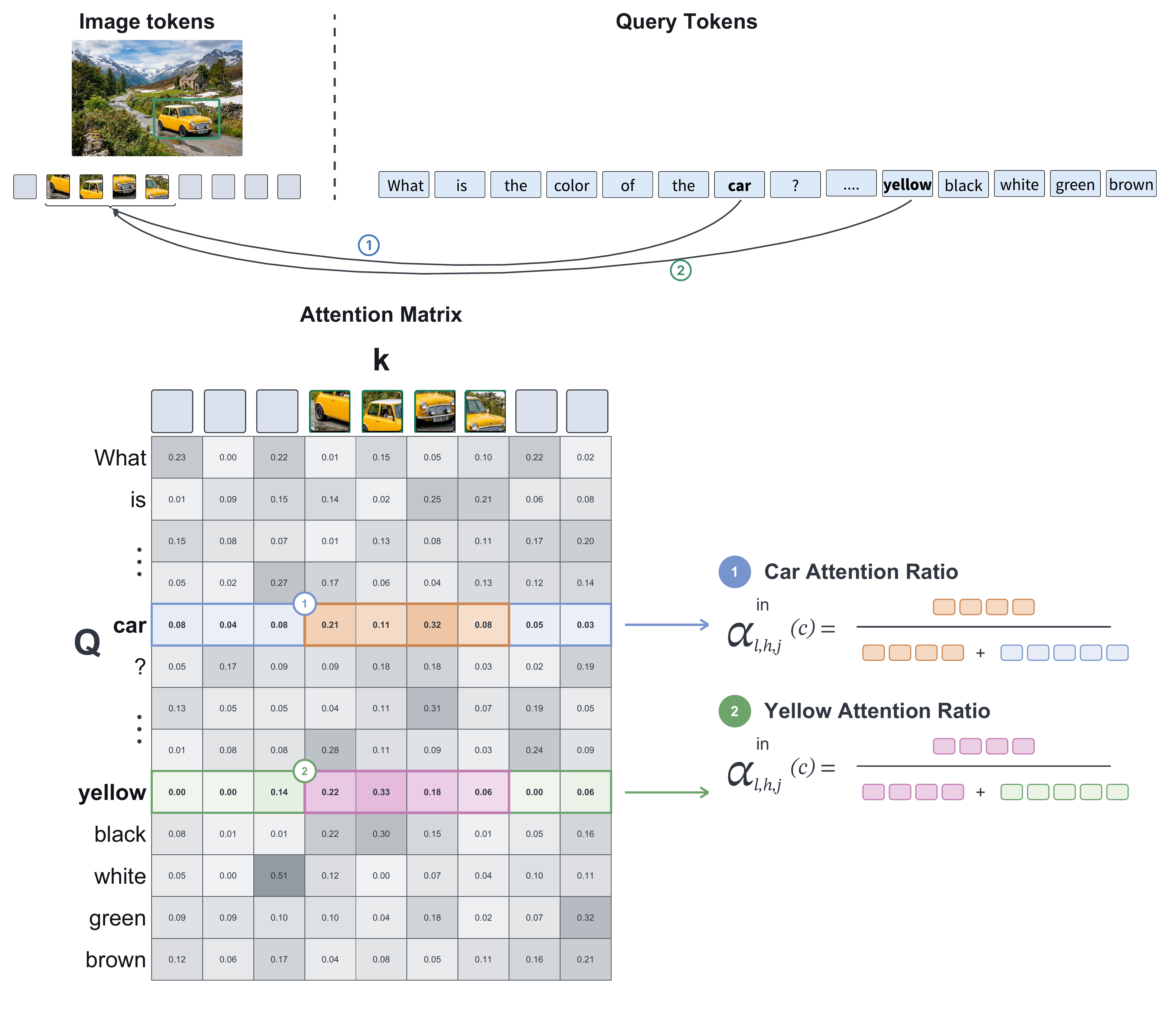}
    \caption{\textbf{Computing object and attribute attention ratios.}
    \emph{Top:} We identify the visual tokens inside the annotated object
    region and the tokens in query $Q$ representing the target object
    (``car'', 1) and a candidate attribute (``yellow'', 2).
    \emph{Bottom:} The highlighted rows show attention from these concept
    tokens to visual tokens at one layer $l$ and head $h$. For each concept,
    attention inside the object region is summed and divided by total image
    attention to obtain the inside attention ratio
    $\alpha_{l,h,j}^{\mathrm{in}}(c)$ (Eq.~\ref{eq:attention_region_ratios}).}
    \label{fig:attention_baseline_example}
\end{figure}

\paragraph{Attention-Evidence Extraction.}
Let $c$ be either a candidate attribute $a\in\mathcal A$ or the target object
$o$, and let $q_1,\ldots,q_m$ be the tokens representing $c$ in the query
$Q$. For layer
$l\in\{1,\ldots,L\}$, head
$h\in\{1,\ldots,H\}$, and concept token $q_j$, let $A_{l,h}(q_j,k_i)$ denote
the attention from $q_j$ to visual token $k_i$. We first compute the inside and
outside attention ratios, defined as the fractions of a concept token's image
attention assigned inside and outside the target-object region:
\begin{equation}
\begin{aligned}
\alpha_{l,h,j}^{\mathrm{in}}(c)
&:=\frac{\sum_{k_i\in\mathcal K_{\mathrm{in}}}A_{l,h}(q_j,k_i)}
{\sum_{k_i\in\mathcal K_{\mathrm{in}}\cup\mathcal K_{\mathrm{out}}}
A_{l,h}(q_j,k_i)},\\
\alpha_{l,h,j}^{\mathrm{out}}(c)
&:=\frac{\sum_{k_i\in\mathcal K_{\mathrm{out}}}A_{l,h}(q_j,k_i)}
{\sum_{k_i\in\mathcal K_{\mathrm{in}}\cup\mathcal K_{\mathrm{out}}}
A_{l,h}(q_j,k_i)}.
\end{aligned}
\label{eq:attention_region_ratios}
\end{equation}
Thus,
$\alpha_{l,h,j}^{\mathrm{in}}(c)+
\alpha_{l,h,j}^{\mathrm{out}}(c)=1$.
Because the regions may contain different numbers of visual tokens, we divide
each attention ratio by its region size, giving the inside and outside attention
densities:
\begin{equation}
\begin{aligned}
d_{l,h,j}^{\mathrm{in}}(c)
&:=\frac{\alpha_{l,h,j}^{\mathrm{in}}(c)}
{|\mathcal K_{\mathrm{in}}|}, &
d_{l,h,j}^{\mathrm{out}}(c)
&:=\frac{\alpha_{l,h,j}^{\mathrm{out}}(c)}
{|\mathcal K_{\mathrm{out}}|}.
\end{aligned}
\label{eq:attention_region_densities}
\end{equation}
Under uniform attention over image patches,
$d_{l,h,j}^{\mathrm{in}}(c)=d_{l,h,j}^{\mathrm{out}}(c)$. We next explain how
these densities are turned into the probes used in
Section~\ref{subsec:probe_supervision}.

\paragraph{Attribute-Evidence Regression.}
For a candidate attribute $a$, we compare its inside and outside densities:
\begin{equation}
p_{l,h,j}^{\mathrm{in}}(a)
:=\frac{d_{l,h,j}^{\mathrm{in}}(a)}
{d_{l,h,j}^{\mathrm{in}}(a)+d_{l,h,j}^{\mathrm{out}}(a)}.
\label{eq:attribute_attention_score}
\end{equation}
Uniform image attention gives $p_{l,h,j}^{\mathrm{in}}(a)=1/2$, whereas a
larger value indicates preferential attention to the target-object region.
At each layer, we average over the $m$ concept tokens and aggregate heads using
one of the strategies defined in \S\ref{app:aggregation}:
\begin{equation}
S_{l,\mathrm{att}}^{\mathrm{attr}}(a)
:=\operatorname{Agg}_{h=1}^{H}
\left(\frac{1}{m}\sum_{j=1}^{m}p_{l,h,j}^{\mathrm{in}}(a)\right).
\label{eq:attribute_attention_aggregation}
\end{equation}
Stacking the layer scores yields
$\mathbf S_{\mathrm{att}}^{\mathrm{attr}}(a)
:=[S_{1,\mathrm{att}}^{\mathrm{attr}}(a),\ldots,
S_{L,\mathrm{att}}^{\mathrm{attr}}(a)]^\top$, labeled $z=1$ when $a$ is
annotated as present on the target object and $z=0$ otherwise.

\paragraph{Object-Evidence Regression.}
For the target object $o$, we retain the two densities as separate features.
Averaging over its $m$ concept tokens and aggregating heads as above gives
\begin{equation}
\begin{aligned}
S_{l,\mathrm{att}}^{\mathrm{obj,in}}(o)
&:=\operatorname{Agg}_{h=1}^{H}
\left(\frac{1}{m}\sum_{j=1}^{m}d_{l,h,j}^{\mathrm{in}}(o)\right),\\
S_{l,\mathrm{att}}^{\mathrm{obj,out}}(o)
&:=\operatorname{Agg}_{h=1}^{H}
\left(\frac{1}{m}\sum_{j=1}^{m}d_{l,h,j}^{\mathrm{out}}(o)\right).
\end{aligned}
\label{eq:object_attention_aggregation}
\end{equation}
Stacking the layer scores gives
$\mathbf S_{\mathrm{att}}^{\mathrm{obj,in}}(o)$ and
$\mathbf S_{\mathrm{att}}^{\mathrm{obj,out}}(o)$. The inside vector is a
positive example ($z=1$), whereas the outside vector is a negative example
($z=0$).

\FloatBarrier
\subsection{Logit Lens}
\label{app_sub:logit_lens}

The logit lens \citep{nostalgebraist2020logitlens} provides a window into the interpretability
of intermediate representations, by applying the output head at every layer rather than only
at the last. In VLMs, it relates visual tokens that encode specific visual information to
vocabulary concepts.

We apply it to \emph{visual} tokens: for the representation $h_l(k_i)$ of
visual token $k_i$ at layer $l$, the head $W_V\in\mathbb R^{|V|\times d}$ yields logits over
the vocabulary $V$, and the projected rank of each $v\in V$ is
\begin{equation}
\operatorname{rank}_l(k_i,v)
:=1+\bigl|\{\,u\in V:\ [W_V h_l(k_i)]_u>[W_V h_l(k_i)]_v\,\}\bigr|,
\label{eq:logit_lens_rank}
\end{equation}
running from one (highest scoring) to $|V|$ (lowest scoring). Because causal masking prevents
visual-token representations from attending to the question $Q$, these rankings depend only on
the image $I$.

Figure~\ref{fig:app_logit_lens} shows this projection for visual tokens sampled inside annotated
VAW object regions. The depth at which object and attribute semantics emerge varies across
objects, consistent with prior work placing semantic interpretation in the middle-to-late
layers \citep{DBLP:conf/iclr/NeoO0G0B25, DBLP:conf/iclr/JiangKPG25,
DBLP:conf/cvpr/JiangCZLSY25}. For the plate, for example, the sampled in-region tokens are
uninformative in the early layers and decode to \emph{plate}, \emph{white} and \emph{round}
only from the middle layers onward, covering both the object identity and its attributes, the
two signals \method{} probes.

\begin{figure}[htbp]
    \centering
    \includegraphics[width=\linewidth]{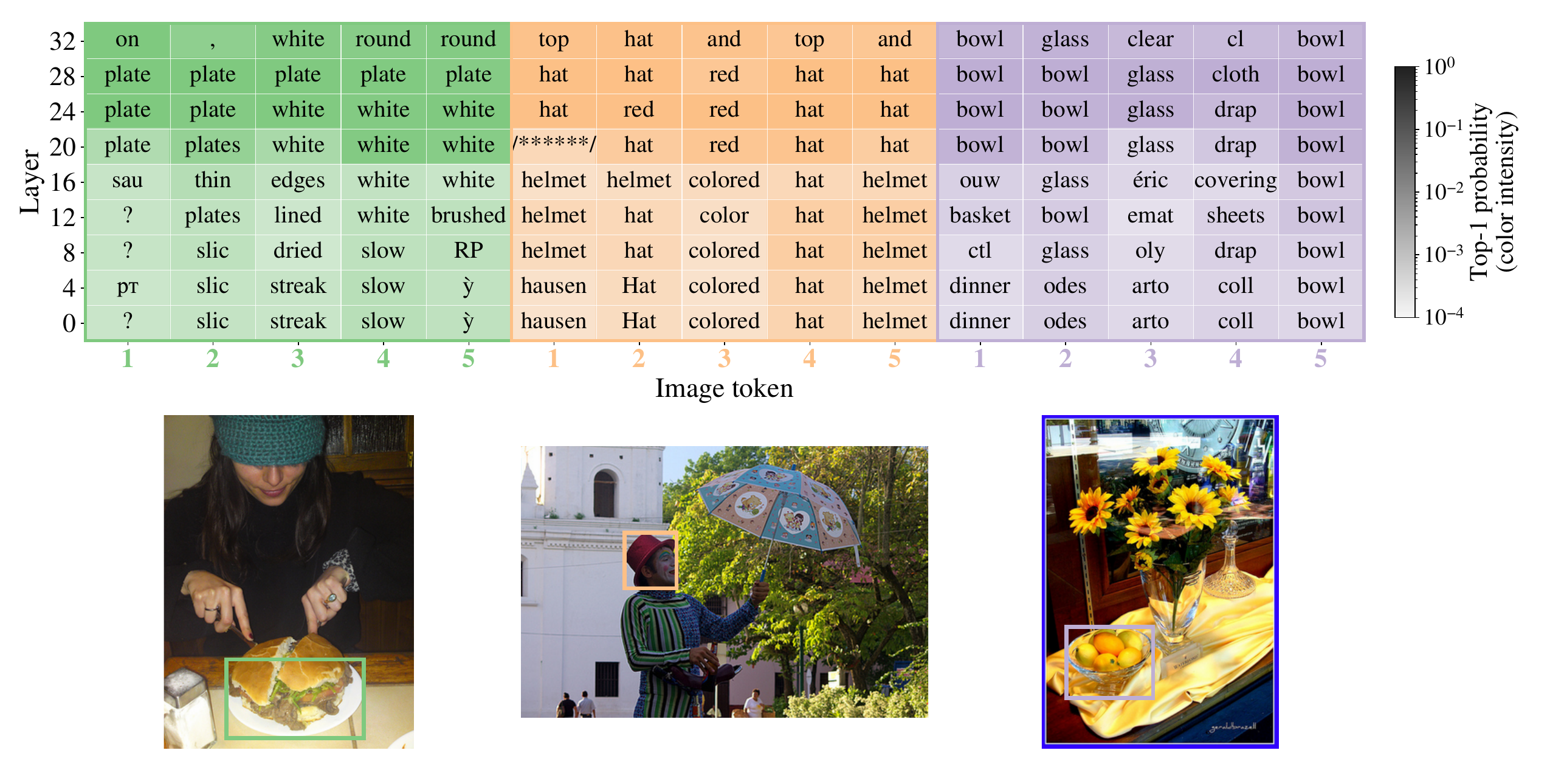}
    \caption{\textbf{Logit-lens projection of visual tokens inside annotated object regions.}
    Qualitative examples from VAW in LLaVA, for three target objects: a plate (green), a hat
    (orange), and a bowl (purple).
    \emph{Top:} For each object, we randomly sample five visual tokens $k_i$ from
    its inside-region set $\mathcal K_{\mathrm{in}}$ and show, at every layer $l$, the
    top-ranked vocabulary token, i.e.\ the $v\in V$ with $\operatorname{rank}_l(k_i,v)=1$
    (Eq.~\ref{eq:logit_lens_rank}). Color intensity indicates the top-1 probability.
    \emph{Bottom:} The corresponding images, each with the annotated bounding box $b$ of its
    target object in the matching color.
    Semantics emerge at different depths across the three objects: the plate tokens are
    uninformative until the middle layers and then decode to \emph{plate}, \emph{white} and
    \emph{round}, whereas \emph{hat} and \emph{bowl} are already top-ranked in the earliest
    layers, with \emph{red} and \emph{glass} following later.}
    \label{fig:app_logit_lens}
\end{figure}

\subsection{Dataset Preprocessing}
\label{app_subsec:data preprocessing}

\paragraph{Filtering and normalization.}
We preprocess the dataset to ensure unambiguous supervision and improve compatibility with our vocabulary-based analysis. Specifically, (1) we remove object instances containing multiple present attributes within the same category, since such cases do not define a unique target attribute; (2) we filter out images containing multiple instances of the same object category with different attribute annotations, as this may introduce ambiguity regarding which instance the model refers to; and (3) we normalize fine-grained attribute values into canonical forms (e.g., \textit{light brown} and \textit{dark brown} $\rightarrow$ \textit{brown}) to reduce label sparsity and improve alignment with vocabulary-level representations, yielding 11 colors, 15 materials, and 5 shapes, with the full mapping given in Table~\ref{tab:app_canonical_map}.

\paragraph{Question construction.}
\label{app_par:question_construction}
Each VAW instance provides an object name $o$, a region (polygon or bounding box), and a set of present and absent attributes. From every instance we construct one question per attribute category $a\in\{\texttt{color},\texttt{material},\texttt{shape}\}$ for which the instance carries a ground-truth value. We use the object name exactly as it appears in the annotation. The region is not shown to the model and only serves to select the visual tokens that \method{} uses for evidence detection (\S\ref{subsec:method_overview}). All questions follow a single template, shared by both models (Figure~\ref{fig:app_prompt}).

\begin{figure}[htbp]
\centering
\fbox{\begin{minipage}{0.92\linewidth}
\small
\emph{Template.}\\[2pt]
{\ttfamily What is the \{$a$\} of the \{$o$\}? Answer with exactly ONE of the
following options, and output only the option text: $v_1$, $v_2$, \ldots, $v_k$.}\\[6pt]
\emph{Example} (category \texttt{color}, object \texttt{traffic light}).\\[2pt]
{\ttfamily What is the color of the traffic light? Answer with exactly ONE of the
following options, and output only the option text: purple, white, red, brown,
yellow, black, green, gray, pink, blue, orange.}\\[6pt]
\emph{Chat format}, with the question above as \texttt{\{q\}}.\\[2pt]
LLaVA:\\
{\ttfamily [INST] \textless image\textgreater{} \{q\} [/INST]}\\[2pt]
Qwen:\\
{\ttfamily \textless |im\_start|\textgreater system}\\
{\ttfamily You are a helpful assistant.\textless |im\_end|\textgreater}\\
{\ttfamily \textless |im\_start|\textgreater user}\\
{\ttfamily \textless |vision\_start|\textgreater\textless |image\_pad|\textgreater
\textless |vision\_end|\textgreater\{q\}\textless |im\_end|\textgreater}\\
{\ttfamily \textless |im\_start|\textgreater assistant}
\end{minipage}}
\caption{\textbf{Question template, an example question, and the per-model chat formats.}
The template is filled with the attribute category $a$, the object name $o$, and the canonical
option set $\mathcal V_a$ in a randomly permuted order. The filled question, denoted
\texttt{\{q\}}, is then wrapped in each model's chat format.}
\label{fig:app_prompt}
\end{figure}

Here $\mathcal V_a=\{v_1,\ldots,v_k\}$ is the canonical option set for category $a$ (Table~\ref{tab:app_option_sets}), whose order is randomly permuted per instance so that the correct answer is not tied to a fixed position. Each permutation is drawn once and then cached: every subsequent run reuses the identical stored prompt, including the controlled-transformation conditions of \S\ref{app:controlled_transformations}, so that differences in behavior across conditions cannot be attributed to a change in option order. The filled template is then wrapped in each model's own chat format.

\begin{table}[t]
\centering
\small
\caption{\textbf{Canonical option sets.} The options presented to the model for each attribute category.}
\label{tab:app_option_sets}
\begin{adjustbox}{width=0.98\textwidth}
\begin{tabular}{@{}p{0.12\linewidth}p{0.03\linewidth}p{\dimexpr0.85\linewidth-4\tabcolsep\relax}@{}}
\toprule
Category & $k$ & Options \\
\midrule
Color & 11 & black, blue, brown, gray, green, orange, pink, purple, red, white, yellow \\
Material & 15 & wood, metal, glass, plastic, rubber, foam, paper, fabric, leather, fur, ceramic, stone, dirt, snow, water \\
Shape & 5 & round, square, curved, triangular, curly \\
\bottomrule
\end{tabular}
\end{adjustbox}
\end{table}

\begin{table}[t]
\centering
\small
\caption{\textbf{Attribute-value normalization.} Each canonical value is listed with the full
set of raw VAW attribute strings mapped onto it.}
\label{tab:app_canonical_map}
\begin{adjustbox}{width=0.98\textwidth}
\begin{tabular}{@{}p{0.18\linewidth}p{\dimexpr0.82\linewidth-2\tabcolsep\relax}@{}}
\toprule
Canonical value & VAW values mapped onto it \\
\midrule
\multicolumn{2}{@{}l}{\emph{Color} (61 raw $\rightarrow$ 11 canonical)}\\
black & black \\
blue & aqua, blue, bluish, bright blue, dark blue, light blue, navy blue, pale blue, teal, turquoise \\
brown & bronze, brown, dark brown, golden brown, light brown, tan, taupe \\
gray & dark gray, gray, light gray, murky, off white, silver \\
green & bright green, dark green, green, green tinted, light green, lime green, neon green, olive green, pale green \\
orange & orange, rust colored, terracotta \\
pink & dark pink, fuchsia, pink \\
purple & dark purple, purple, purplish, violet \\
red & bright red, burgundy, dark red, light red, maroon, red, reddish \\
white & beige, cream colored, ivory, white \\
yellow & amber, blond, bright yellow, golden, light yellow, sand colored, yellow \\
\midrule
\multicolumn{2}{@{}l}{\emph{Material} (50 raw $\rightarrow$ 15 canonical)}\\
wood & bamboo, hardwood, wicker, wooden \\
metal & aluminum, brass, chrome, iron, metal, stainless steel, steel \\
glass & glass \\
plastic & plastic \\
rubber & rubber \\
foam & foamy, styrofoam \\
paper & cardboard, paper \\
fabric & cloth, denim, fabric, jean, khaki, silk \\
leather & leather \\
fur & fluffy, furry, plush \\
ceramic & ceramic, clay, porcelain, tiled \\
stone & asphalt, brick, cement, cobblestone, concrete, granite, gravel, marble, pebbled, red brick, rocky, stone, stucco \\
dirt & dirt, muddy, sandy \\
snow & snowy \\
water & water \\
\midrule
\multicolumn{2}{@{}l}{\emph{Shape} (20 raw $\rightarrow$ 5 canonical)}\\
round & circular, domed, oval shaped, round, rounded, spherical \\
square & cubed, rectangular, square \\
curved & arch shaped, curved, cylindrical \\
triangular & arrow shaped, diamond shaped, octagonal, pointy, spiky, triangular \\
curly & curled, curly \\
\bottomrule
\end{tabular}
\end{adjustbox}
\end{table}

\paragraph{Answer generation.}
We keep the VLM frozen and decode greedily (\texttt{do\_sample=False}, \texttt{num\_beams=1}, \texttt{max\_new\_tokens=10}), so answers are reproducible across runs. The answer is generated rather than scored against the candidates, and the option list appears in the prompt only to constrain the output to a known vocabulary, so that predictions can be compared against VAW labels without a free-text matching step.

\section{Complementary Results}
\label{app:detailed_results}

\subsection{Development-Set Configuration Selection}
\label{app_sub:dev_set_results}

\paragraph{Development Split.}
To choose the configuration of our rank-based method and of each baseline, we randomly divided the
original training split into 60\% for fitting the candidate estimators and 40\% for development-set
evaluation. Only these training-derived subsets were used for configuration selection, leaving the
test set untouched. Tables~\ref{tab:attribute_prediction_detailed}
and~\ref{tab:object_existence_detailed} report development-set performance for every configuration
considered, on LLaVA and Qwen and separately for color, material, and shape. AUC is threshold-free,
whereas F1 and accuracy are computed at a probability threshold of $0.5$, at which a probe declares
the evidence present.

\paragraph{Attribute-Evidence Configurations.}
Following Section~\ref{sec:experimental}, we varied two design choices: the \emph{aggregation
strategy} used to summarize visual-token signals (\S\ref{app:aggregation}), and the \emph{pooling configuration}, which
determines whether one estimator is shared across all values in a category (\emph{All}) or one is
fitted per value (\emph{Separate}). This yields 18 configurations per model and attribute category (8 rank-based, 2 confidence-based, 2 classifier-based, and 6 attention-based).
For the rank-based method we crossed both pooling configurations with the minimum, maximum, mean,
and median aggregation strategies. The confidence-based baseline was evaluated with both pooling
configurations but requires no aggregation strategy, as it retains a single maximum confidence
value. The classifier-based baseline is inherently \emph{Separate}, and we tested the mean and
maximum strategies. For the attention-based baseline we crossed both pooling configurations with
the mean, median, and maximum strategies.

\paragraph{Object-Evidence Configurations.}
For object evidence a single estimator is shared across object categories, so the distinction between
\emph{All} and \emph{Separate} does not apply. We evaluated eight configurations: four rank
aggregation strategies, the confidence baseline without an aggregation strategy, and three
attention aggregation strategies. We do not evaluate the classifier baseline for objects, because
its fixed label space is incompatible with the open-ended object vocabulary.

\paragraph{Selected Configurations.}
We selected one aggregation strategy per method, shared across attribute categories and across both
VLMs, favoring the strategy that was strongest across AUC, F1, and accuracy. For attribute evidence this gives the mean strategy for the
rank- and classifier-based methods and the maximum strategy for the attention-based baseline, and for
object evidence the median strategy for the rank- and attention-based methods. Pooling is treated
differently: we keep both \emph{All} and \emph{Separate} for the rank-, confidence-, and
attention-based estimators, as they are two distinct estimators, one shared across the values of a
category and one fitted per value. After fixing these
choices we refit each estimator on the complete training split, the 60\% fitting and 40\% development
portions combined, and evaluated it once on the held-out test set, giving the results reported in
Table~\ref{tab:evidence_detection_auc}.

\paragraph{Differences Across Attribute Categories.}
The rank-based results in Table~\ref{tab:attribute_prediction_detailed} follow the same ordering in both
models: color and material yield comparably strong evidence, whereas shape is consistently the most
challenging. A plausible explanation is the amount of visual context each property requires, even for a
human observer. A small patch near the center of a red wooden table may be enough to identify its color,
and the grain visible on that same surface likewise reveals its material. Deciding whether the tabletop is round or rectangular, by contrast, requires seeing its outer
boundary, so no local region is sufficient. Shape is therefore the property least likely to be recoverable
from the tokens inside the annotated object region.

\begin{table}[t]
\centering
\caption{\textbf{Development-set attribute-evidence prediction.} AUC, F1, and
accuracy for the attribute probe $M_{\mathrm{attr}}$ under every combination of
pooling configuration (\emph{All}, \emph{Separate}) and aggregation strategy
(\S\ref{app:aggregation}), reported for LLaVA and Qwen and grouped by attribute
category. Bold marks the best value in each column within a method and pooling
configuration.}
\label{tab:attribute_prediction_detailed}
\fontsize{6.8}{7.4}\selectfont
\renewcommand{\arraystretch}{0.82}
\begin{adjustbox}{width=0.98\textwidth}
\begin{tabular}{lll|ccc|ccc}
\toprule
\cellcolor{Gray!15}
& \cellcolor{Gray!15}
& \cellcolor{Gray!15}
& \multicolumn{3}{c|}{\cellcolor{Gray!40}\textbf{LLaVA}}
& \multicolumn{3}{c}{\cellcolor{Gray!40}\textbf{Qwen}} \\
\cellcolor{Gray!15} Category
& \cellcolor{Gray!15} Method
& \cellcolor{Gray!15} Aggregation
& \cellcolor{Gray!15} AUC
& \cellcolor{Gray!15} F1
& \cellcolor{Gray!15} Acc.
& \cellcolor{Gray!15} AUC
& \cellcolor{Gray!15} F1
& \cellcolor{Gray!15} Acc. \\
\midrule
\rowcolor{Gray!15}
\multicolumn{9}{c}{\textbf{Attribute: Color}\quad
{\tiny $N_{\mathrm{LLaVA}}=N_{\mathrm{Qwen}}=768$}} \\
\midrule
Rank-based & All & Min & 0.65 & 0.54 & 0.62 & 0.70 & 0.61 & 0.66 \\
 & All & Max & 0.75 & 0.75 & 0.70 & 0.76 & 0.74 & 0.70 \\
 & All & Mean & \textbf{0.95} & \textbf{0.89} & \textbf{0.88} & \textbf{0.91} & \textbf{0.82} & \textbf{0.82} \\
 & All & Median & 0.94 & 0.88 & \textbf{0.88} & 0.90 & \textbf{0.82} & 0.81 \\
 & Separate & Min & 0.55 & 0.66 & 0.53 & 0.70 & 0.62 & 0.66 \\
 & Separate & Max & 0.56 & 0.68 & 0.53 & 0.75 & 0.75 & 0.72 \\
 & Separate & Mean & \textbf{0.96} & 0.89 & \textbf{0.89} & \textbf{0.92} & \textbf{0.83} & \textbf{0.83} \\
 & Separate & Median & \textbf{0.96} & \textbf{0.90} & \textbf{0.89} & 0.90 & 0.81 & 0.81 \\
\cmidrule(lr){1-9}
Confidence-based & All & -- & 0.79 & 0.73 & 0.70 & 0.79 & 0.72 & 0.71 \\
 & Separate & -- & 0.81 & 0.76 & 0.73 & 0.79 & 0.73 & 0.72 \\
\cmidrule(lr){1-9}
Classifier-based & Separate & Mean & 0.92 & 0.85 & 0.85 & 0.92 & \textbf{0.84} & \textbf{0.84} \\
 & Separate & Max & 0.90 & 0.82 & 0.82 & 0.88 & 0.78 & 0.79 \\
\cmidrule(lr){1-9}
Attention-based & All & Mean & 0.78 & 0.71 & \textbf{0.72} & \textbf{0.80} & \textbf{0.73} & \textbf{0.73} \\
 & All & Median & 0.76 & 0.70 & 0.71 & 0.75 & 0.71 & 0.70 \\
 & All & Max & \textbf{0.79} & \textbf{0.72} & \textbf{0.72} & 0.79 & \textbf{0.73} & 0.72 \\
 & Separate & Mean & 0.76 & 0.70 & 0.69 & 0.77 & 0.70 & 0.70 \\
 & Separate & Median & 0.75 & 0.69 & 0.69 & 0.74 & 0.68 & 0.68 \\
 & Separate & Max & \textbf{0.78} & \textbf{0.72} & \textbf{0.71} & \textbf{0.79} & \textbf{0.73} & \textbf{0.72} \\
\midrule
\rowcolor{Gray!15}
\multicolumn{9}{c}{\textbf{Attribute: Material}\quad
{\tiny $N_{\mathrm{LLaVA}}=N_{\mathrm{Qwen}}=632$}} \\
\midrule
Rank-based & All & Min & 0.62 & 0.54 & 0.59 & 0.64 & 0.54 & 0.59 \\
 & All & Max & 0.75 & 0.76 & 0.71 & 0.81 & 0.78 & 0.76 \\
 & All & Mean & \textbf{0.90} & \textbf{0.81} & \textbf{0.81} & \textbf{0.86} & \textbf{0.80} & \textbf{0.80} \\
 & All & Median & 0.89 & \textbf{0.81} & \textbf{0.81} & \textbf{0.86} & 0.79 & 0.79 \\
 & Separate & Min & 0.77 & 0.68 & 0.71 & 0.67 & 0.59 & 0.62 \\
 & Separate & Max & 0.83 & 0.78 & 0.76 & 0.86 & 0.78 & 0.77 \\
 & Separate & Mean & \textbf{0.97} & \textbf{0.91} & \textbf{0.91} & \textbf{0.95} & \textbf{0.86} & \textbf{0.87} \\
 & Separate & Median & 0.96 & 0.90 & 0.90 & 0.94 & 0.85 & 0.85 \\
\cmidrule(lr){1-9}
Confidence-based & All & -- & 0.85 & 0.77 & 0.77 & 0.83 & 0.75 & 0.76 \\
 & Separate & -- & 0.87 & 0.78 & 0.78 & 0.85 & 0.76 & 0.77 \\
\cmidrule(lr){1-9}
Classifier-based & Separate & Mean & 0.92 & 0.85 & 0.84 & 0.92 & 0.86 & 0.85 \\
 & Separate & Max & 0.92 & 0.85 & 0.85 & 0.92 & 0.86 & 0.86 \\
\cmidrule(lr){1-9}
Attention-based & All & Mean & 0.70 & \textbf{0.65} & 0.64 & 0.74 & 0.69 & 0.67 \\
 & All & Median & 0.70 & \textbf{0.65} & \textbf{0.65} & 0.71 & 0.69 & 0.67 \\
 & All & Max & \textbf{0.71} & \textbf{0.65} & 0.64 & \textbf{0.77} & \textbf{0.72} & \textbf{0.70} \\
 & Separate & Mean & 0.72 & 0.66 & \textbf{0.67} & 0.73 & 0.69 & 0.68 \\
 & Separate & Median & 0.71 & 0.66 & 0.65 & 0.71 & 0.65 & 0.64 \\
 & Separate & Max & \textbf{0.74} & \textbf{0.68} & \textbf{0.67} & \textbf{0.75} & \textbf{0.70} & \textbf{0.69} \\
\midrule
\rowcolor{Gray!15}
\multicolumn{9}{c}{\textbf{Attribute: Shape}\quad
{\tiny $N_{\mathrm{LLaVA}}=N_{\mathrm{Qwen}}=480$}} \\
\midrule
Rank-based & All & Min & 0.61 & 0.55 & 0.59 & 0.56 & 0.44 & 0.54 \\
 & All & Max & 0.53 & 0.67 & 0.54 & 0.72 & 0.69 & 0.66 \\
 & All & Mean & \textbf{0.84} & \textbf{0.76} & \textbf{0.75} & \textbf{0.83} & \textbf{0.74} & \textbf{0.74} \\
 & All & Median & 0.78 & 0.73 & 0.72 & 0.77 & 0.70 & 0.70 \\
 & Separate & Min & 0.66 & 0.64 & 0.62 & 0.57 & 0.49 & 0.57 \\
 & Separate & Max & 0.55 & 0.64 & 0.54 & 0.73 & 0.66 & 0.65 \\
 & Separate & Mean & \textbf{0.90} & \textbf{0.82} & \textbf{0.81} & \textbf{0.88} & \textbf{0.78} & \textbf{0.78} \\
 & Separate & Median & 0.85 & 0.78 & 0.78 & 0.82 & 0.72 & 0.72 \\
\cmidrule(lr){1-9}
Confidence-based & All & -- & 0.70 & 0.62 & 0.64 & 0.72 & 0.64 & 0.66 \\
 & Separate & -- & 0.72 & 0.63 & 0.66 & 0.74 & 0.66 & 0.67 \\
\cmidrule(lr){1-9}
Classifier-based & Separate & Mean & 0.82 & 0.74 & 0.74 & 0.84 & 0.77 & 0.77 \\
 & Separate & Max & 0.80 & 0.75 & 0.75 & 0.83 & 0.76 & 0.75 \\
\cmidrule(lr){1-9}
Attention-based & All & Mean & 0.64 & 0.62 & 0.61 & \textbf{0.65} & 0.61 & 0.59 \\
 & All & Median & 0.65 & \textbf{0.63} & \textbf{0.62} & \textbf{0.65} & 0.61 & \textbf{0.61} \\
 & All & Max & \textbf{0.67} & \textbf{0.63} & 0.60 & 0.63 & \textbf{0.62} & 0.59 \\
 & Separate & Mean & 0.67 & 0.62 & 0.61 & \textbf{0.66} & 0.61 & 0.60 \\
 & Separate & Median & 0.67 & 0.63 & \textbf{0.62} & \textbf{0.66} & 0.61 & 0.60 \\
 & Separate & Max & \textbf{0.70} & \textbf{0.65} & \textbf{0.62} & \textbf{0.66} & \textbf{0.64} & \textbf{0.61} \\
\bottomrule
\end{tabular}
\end{adjustbox}
\vspace{-0.8em}
\end{table}

\begin{table}[t]
\centering
\caption{\textbf{Development-set object-evidence prediction.} AUC, F1, and accuracy
for the object probe $M_{\mathrm{obj}}$ under each aggregation strategy
(\S\ref{app:aggregation}), reported for LLaVA and Qwen and grouped by the attribute
category of the query. Pooling does not apply here, as a single estimator is shared
across object categories. Bold marks the best value in each column within a method.}
\label{tab:object_existence_detailed}
\small
\renewcommand{\arraystretch}{1.02}
\begin{adjustbox}{width=0.98\textwidth}
\begin{tabular}{ll|ccc|ccc}
\toprule
\cellcolor{Gray!15}
& \cellcolor{Gray!15}
& \multicolumn{3}{c|}{\cellcolor{Gray!40}\textbf{LLaVA}}
& \multicolumn{3}{c}{\cellcolor{Gray!40}\textbf{Qwen}} \\
\cellcolor{Gray!15} Category
& \cellcolor{Gray!15} Aggregation
& \cellcolor{Gray!15} AUC
& \cellcolor{Gray!15} F1
& \cellcolor{Gray!15} Acc.
& \cellcolor{Gray!15} AUC
& \cellcolor{Gray!15} F1
& \cellcolor{Gray!15} Acc. \\
\midrule
\rowcolor{Gray!15}
\multicolumn{8}{c}{\textbf{Attribute: Color}\quad
{\tiny $N_{\mathrm{LLaVA}}=N_{\mathrm{Qwen}}=1{,}494$}} \\
\midrule
Rank-based & Min & 0.52 & 0.33 & 0.53 & 0.75 & 0.65 & 0.70 \\
 & Max & 0.87 & 0.75 & 0.77 & \textbf{0.90} & \textbf{0.82} & \textbf{0.82} \\
 & Mean & 0.88 & 0.80 & 0.80 & 0.81 & 0.74 & 0.75 \\
 & Median & \textbf{0.91} & \textbf{0.83} & \textbf{0.82} & 0.83 & 0.76 & 0.75 \\
\cmidrule(lr){1-8}
Confidence-based & -- & 0.54 & 0.60 & 0.52 & 0.56 & 0.60 & 0.54 \\
\cmidrule(lr){1-8}
Attention-based & Mean & 1.00 & 0.98 & 0.98 & 0.96 & 0.93 & 0.94 \\
 & Median & \textbf{1.00} & \textbf{0.99} & \textbf{0.99} & 0.96 & \textbf{0.93} & \textbf{0.94} \\
 & Max & 1.00 & 0.97 & 0.97 & \textbf{0.97} & 0.93 & 0.93 \\
\midrule
\rowcolor{Gray!15}
\multicolumn{8}{c}{\textbf{Attribute: Material}\quad
{\tiny $N_{\mathrm{LLaVA}}=N_{\mathrm{Qwen}}=1{,}224$}} \\
\midrule
Rank-based & Min & 0.51 & 0.29 & 0.53 & 0.74 & 0.65 & 0.70 \\
 & Max & 0.86 & 0.75 & 0.77 & \textbf{0.89} & \textbf{0.80} & \textbf{0.81} \\
 & Mean & 0.89 & 0.81 & 0.80 & 0.83 & 0.75 & 0.75 \\
 & Median & \textbf{0.91} & \textbf{0.84} & \textbf{0.83} & 0.85 & 0.78 & 0.78 \\
\cmidrule(lr){1-8}
Confidence-based & -- & 0.52 & 0.59 & 0.51 & 0.58 & 0.61 & 0.54 \\
\cmidrule(lr){1-8}
Attention-based & Mean & \textbf{1.00} & 0.98 & 0.98 & \textbf{0.97} & \textbf{0.94} & \textbf{0.94} \\
 & Median & 1.00 & \textbf{0.99} & \textbf{0.99} & 0.96 & 0.93 & 0.93 \\
 & Max & 0.99 & 0.95 & 0.96 & 0.96 & 0.92 & 0.92 \\
\midrule
\rowcolor{Gray!15}
\multicolumn{8}{c}{\textbf{Attribute: Shape}\quad
{\tiny $N_{\mathrm{LLaVA}}=N_{\mathrm{Qwen}}=940$}} \\
\midrule
Rank-based & Min & 0.51 & 0.34 & 0.54 & 0.76 & 0.67 & 0.71 \\
 & Max & 0.87 & 0.76 & 0.78 & \textbf{0.92} & \textbf{0.83} & \textbf{0.83} \\
 & Mean & 0.88 & 0.80 & 0.79 & 0.87 & 0.80 & 0.80 \\
 & Median & \textbf{0.91} & \textbf{0.83} & \textbf{0.81} & 0.87 & 0.81 & 0.80 \\
\cmidrule(lr){1-8}
Confidence-based & -- & 0.53 & 0.62 & 0.52 & 0.60 & 0.61 & 0.55 \\
\cmidrule(lr){1-8}
Attention-based & Mean & 1.00 & 0.98 & 0.99 & \textbf{0.98} & \textbf{0.95} & \textbf{0.95} \\
 & Median & \textbf{1.00} & \textbf{0.99} & \textbf{0.99} & 0.98 & 0.94 & 0.94 \\
 & Max & 0.99 & 0.96 & 0.96 & 0.97 & 0.92 & 0.93 \\
\bottomrule
\end{tabular}
\end{adjustbox}
\vspace{-0.8em}
\end{table}

\FloatBarrier
\subsection{Joint Object-and-Attribute AUC}
\label{app_sub:joint_auc}

The \emph{Attr$\times$Obj} column of Table~\ref{tab:evidence_detection_auc} evaluates the composite
probe $M=M_{\mathrm{obj}}\land M_{\mathrm{attr}}$
(\S\ref{subsec:visual_evidence_object_attribute}), which is correct only when the target object is
detected \emph{and} the attribute decision matches the annotation, whether the attribute is present
or absent. Reporting this as a single ROC-AUC requires one score per instance. Let
$p_{\mathrm{attr}}$ and $p_{\mathrm{obj}}$ denote the probabilities produced by the two probes before
thresholding (Eqs.~\ref{eq:attribute_probe} and~\ref{eq:object_probe}), with the object probe
evaluated on the inside region $\mathcal K_{\mathrm{in}}$, and define
\begin{equation}
s:=p_{\mathrm{obj}}\,(2p_{\mathrm{attr}}-1).
\label{eq:joint_score}
\end{equation}
The sign of $s$ carries the attribute decision and its magnitude the object evidence: positive when
the target value is supported, negative when it is rejected, and near zero when the object is not
localized. The two kinds of correct decision therefore fall at opposite ends of the ranking, while
instances lacking object evidence support neither. We report the area under the ROC curve obtained by ranking
instances by $s$, with $y=1$ when the attribute is annotated as present on the target object and
$y=0$ otherwise, pooled over color, material, and shape ($N=3{,}128$).

\FloatBarrier
\subsection{ROC Curves}
\label{app_sub:roc_curves}

Figure~\ref{fig:app_roc_curves} shows the curves behind the attribute AUCs in
Table~\ref{tab:evidence_detection_auc}. On LLaVA the rank-based probe dominates every baseline across
the whole range, so its advantage does not depend on the threshold chosen, whereas on Qwen the
classifier-based baseline is comparable to it.

\begin{figure}[htbp]
    \centering
    \includegraphics[width=\linewidth]{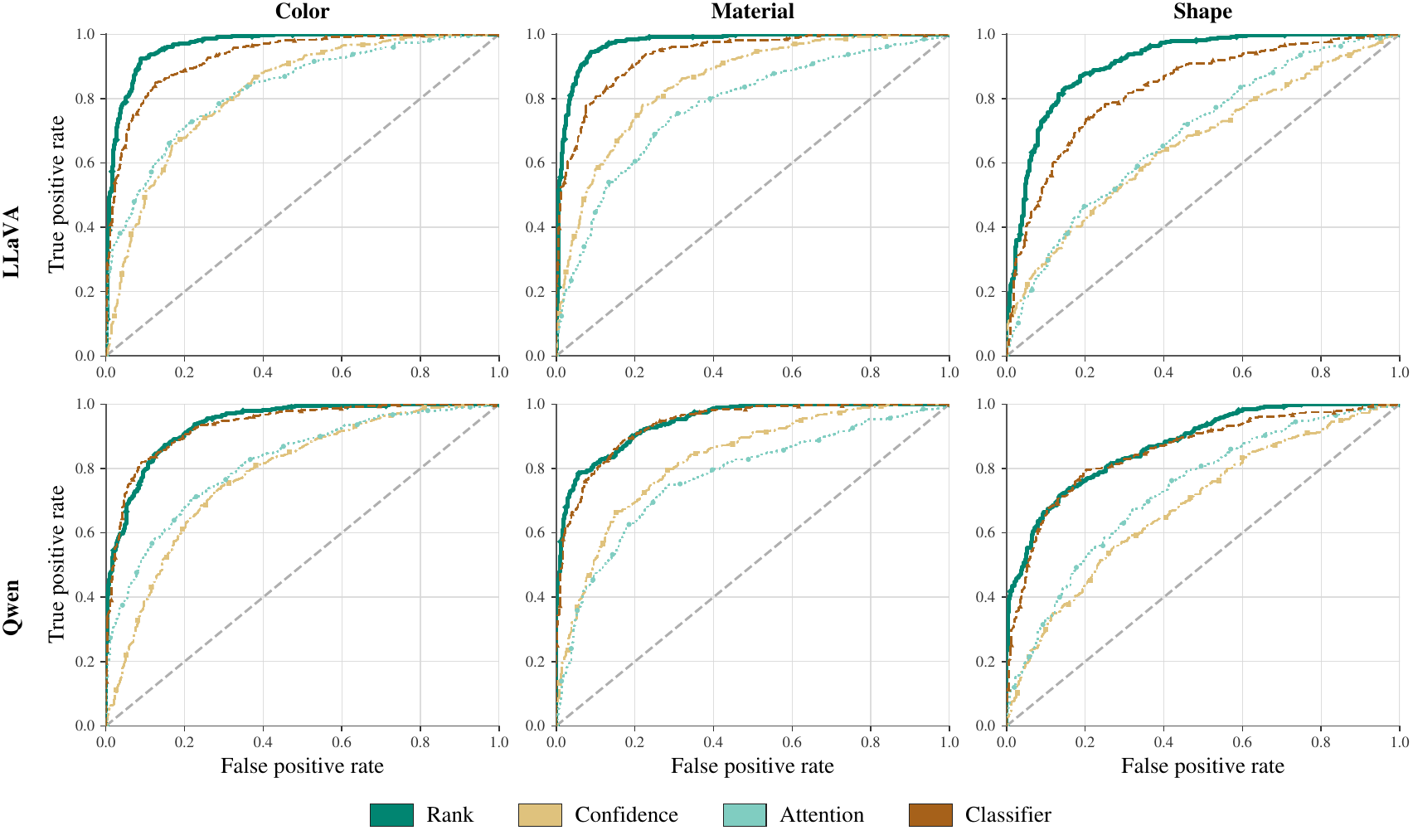}
    \caption{\textbf{ROC curves for attribute-evidence detection.} True-positive against false-positive rate
    for each method, shown for LLaVA (top) and Qwen (bottom) and separately for color, material, and shape,
    using the \emph{Separate} pooling configuration. The areas under these curves are the attribute columns
    of Table~\ref{tab:evidence_detection_auc}. The dashed diagonal marks chance.}
    \label{fig:app_roc_curves}
\end{figure}

\FloatBarrier
\subsection{Controlled Transformations}
\label{app:controlled_transformations}
\begin{figure}[htbp]
    \centering
    \includegraphics[width=\linewidth]{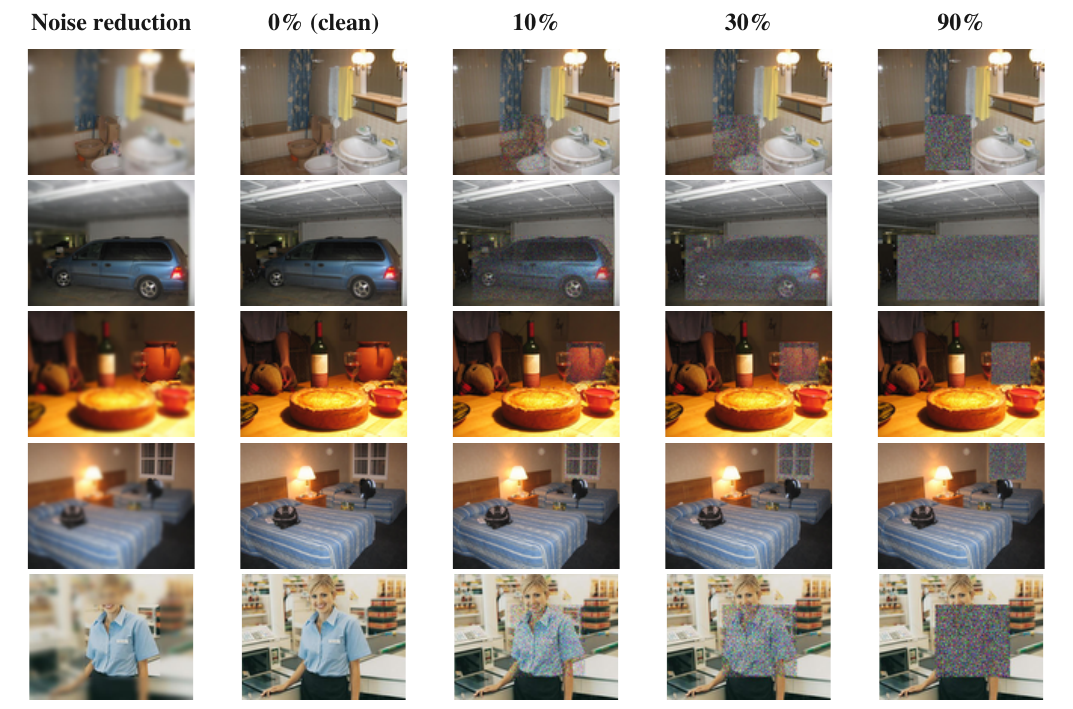}
    \caption{\textbf{Controlled transformations.} Five images from the VAW dataset (rows) with each transformation applied (columns): the object emphasized by blurring its surroundings ($\psi_+$), the clean image, and Gaussian noise within the bounding box $b$ at $\lambda=0.1$, $0.3$, and $0.9$ ($\psi_-$).}
    \label{fig:app_controlled_transformations}
\end{figure}

We instantiate the answer-preserving transformations $\psi_+$ and $\psi_-$ of Section~\ref{subsec:bounds_imperfect_probe} as pixel-level image edits, applied separately for each target-object instance. The positive transformation strengthens the relevant visual signal, while the negative transformation weakens it. Following Section~\ref{subsec:method_overview}, let $o$ denote the target object and $b$ its annotated bounding box, and let $I_p\in[0,1]^3$ be the normalized RGB value of pixel $p$ in $I$. Both transformations act only on the image component of the input $X=(I,Q)$, so that $X_+=(I_+,Q)$ and $X_-=(I_-,Q)$. Values are clipped to $[0,1]$ and converted back to 8-bit RGB.

\paragraph{Object Emphasis ($\psi_+$).}
The positive transformation highlights the relevant object, strengthening its visual signal by suppressing the surrounding context while leaving the queried object itself visually unchanged. We blur the full RGB image with a 2D Gaussian filter of kernel size $k=41$ and standard deviation $\sigma=k/6\approx6.83$, so that the kernel spans approximately $\pm3\sigma$ of the Gaussian, and composite the result with the original using the bounding box $b$ as a binary object mask:
\begin{equation}
(I_+)_p=
\begin{cases}
I_p, & p\in b,\\
(G_\sigma * I)_p, & p\notin b,
\end{cases}
\label{eq:controlled_blur}
\end{equation}
where $G_\sigma * I$ denotes the blurred image. The blur removes the edges, texture, and small structures of the background, leaving only its coarse layout and color, while the appearance of $o$ inside $b$ remains identical to $I$, so the ground-truth answer is preserved, i.e., $Y_{X_+}=Y_X$.

\paragraph{Object Degradation ($\psi_-$).}
The negative transformations corrupt only the pixels in the bounding box $b$. For each pixel and RGB channel, we independently sample $\varepsilon_p\sim\mathcal N(0,1)$ and compute
\begin{equation}
(I_-)_p=
\begin{cases}
\operatorname{clip}_{[0,1]}\!\left(\sqrt{1-\lambda}\,I_p+\sqrt{\lambda}\,\varepsilon_p\right), & p\in b,\\
I_p, & p\notin b.
\end{cases}
\label{eq:controlled_noise}
\end{equation}
As $\lambda$ grows, the corrupted region retains less of the original image and more Gaussian noise: at $\lambda=0.1$ the object region is only lightly perturbed, whereas at $\lambda=0.9$ the noise dominates and the original content is almost entirely replaced. We report the noise strength of this diffusion-style Gaussian corruption as $100\lambda\%$, so that 10\%, 30\%, and 90\% correspond to $\lambda=0.1$, $0.3$, and $0.9$, respectively. The strength $\lambda$ controls a trade-off: it must weaken the relevant visual signal, yet the transformation must remain answer-preserving. On the validation set we therefore select $\lambda=0.3$ as $\psi_-$, the strength at which the object region is visibly corrupted but the ground-truth answer is still unchanged, i.e., $Y_{X_-}=Y_X$. Figure~\ref{fig:app_controlled_transformations} illustrates this choice: the target object remains identifiable at $\lambda=0.3$, whereas at $\lambda=0.9$ it is largely destroyed and the correct answer is no longer recoverable. This is the transformation used to construct the bounds of Theorem~\ref{thm:vh_bounds}.

\subsection{Layer-Wise Visual-Evidence Development}
\label{app:layerwise_visual_evidence}

\begin{figure}[htbp]
    \centering
    \includegraphics[width=\linewidth]{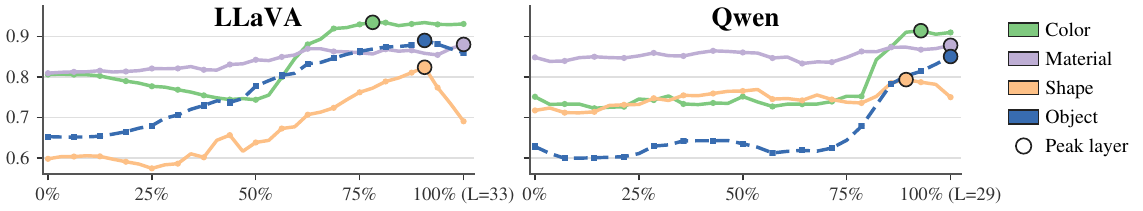}
    \caption{\textbf{Visual-evidence development across VLM layers.}
    Layer-wise attribute and object evidence for LLaVA (left) and Qwen (right). The figure reports the AUC obtained from each individual layer. The complete trajectory captures evidence that becomes accessible at different depths across attributes and models.}
    \label{fig:layerwise_visual_evidence}
\end{figure}

\FloatBarrier

\subsection{Attention Intervention}
\label{app_sub:attention-boost}

\paragraph{Intervention.}
Because evidence does not become accessible at a universal layer
(Figure~\ref{fig:layerwise_visual_evidence}), we intervene at every layer and head rather than at a
selected one. As in Section~\ref{app_sub:attention_baseline}, let $A_{l,h}(q_j,k_i)$ denote the
attention from query token $q_j\in Q$ to visual token $k_i$ at layer $l$ and head $h$, let
$z_{l,h}(q_j,\cdot)$ be the corresponding pre-softmax logits, and let $\mathcal K_{\mathrm{in}}$ be
the visual tokens inside the annotated target-object region $b$
(Section~\ref{subsec:method_overview}). For every query token we add a constant $\alpha$ to the
logits of the inside tokens before the softmax:
\begin{equation}
\tilde{A}_{l,h}(q_j,u)
=\frac{\exp\bigl(z_{l,h}(q_j,u)+\alpha\,\mathbb{I}\{u\in\mathcal K_{\mathrm{in}}\}\bigr)}
{\sum_{u'}\exp\bigl(z_{l,h}(q_j,u')+\alpha\,\mathbb{I}\{u'\in\mathcal K_{\mathrm{in}}\}\bigr)},
\label{eq:attention_intervention}
\end{equation}
where $u,u'$ range over all positions attended by $q_j$. Because the softmax normalizes over all
positions, this is equivalent to scaling the attention paid to the inside tokens by $e^{\alpha}$,
so attention elsewhere is reduced proportionally rather than removed. A single $\alpha$ is used for
every layer, head, and model.

\paragraph{Choosing the Intervention Strength.}
A larger $\alpha$ corrects more errors, but it also risks overwriting answers that were already
correct, so we choose the largest value whose collateral cost stays small. Let $\mathcal{C}$ be the
examples for color, material, and shape whose open-ended baseline answer already matches the
ground truth. Re-asking the same question under the intervention, the \emph{damage rate} is the
fraction of these that become incorrect:
\[
\mathrm{DamageRate}(\alpha)
=\frac{\bigl|\{x\in\mathcal{C}\;:\;f_\alpha(x)\text{ incorrect}\}\bigr|}{|\mathcal{C}|}.
\]
Averaged over the three attributes, it is $3.26\%$ at $\alpha{=}1$ ($e^{\alpha}\!\approx\!2.7$),
$4.74\%$ at $\alpha{=}2$ ($\approx\!7.4$), and $9.99\%$ at $\alpha{=}3$ ($\approx\!20$). We use
$\alpha{=}2$, the largest strength that keeps the damage rate below $5\%$.

\begin{figure}[t]
    \centering
    \includegraphics[width=0.98\linewidth]{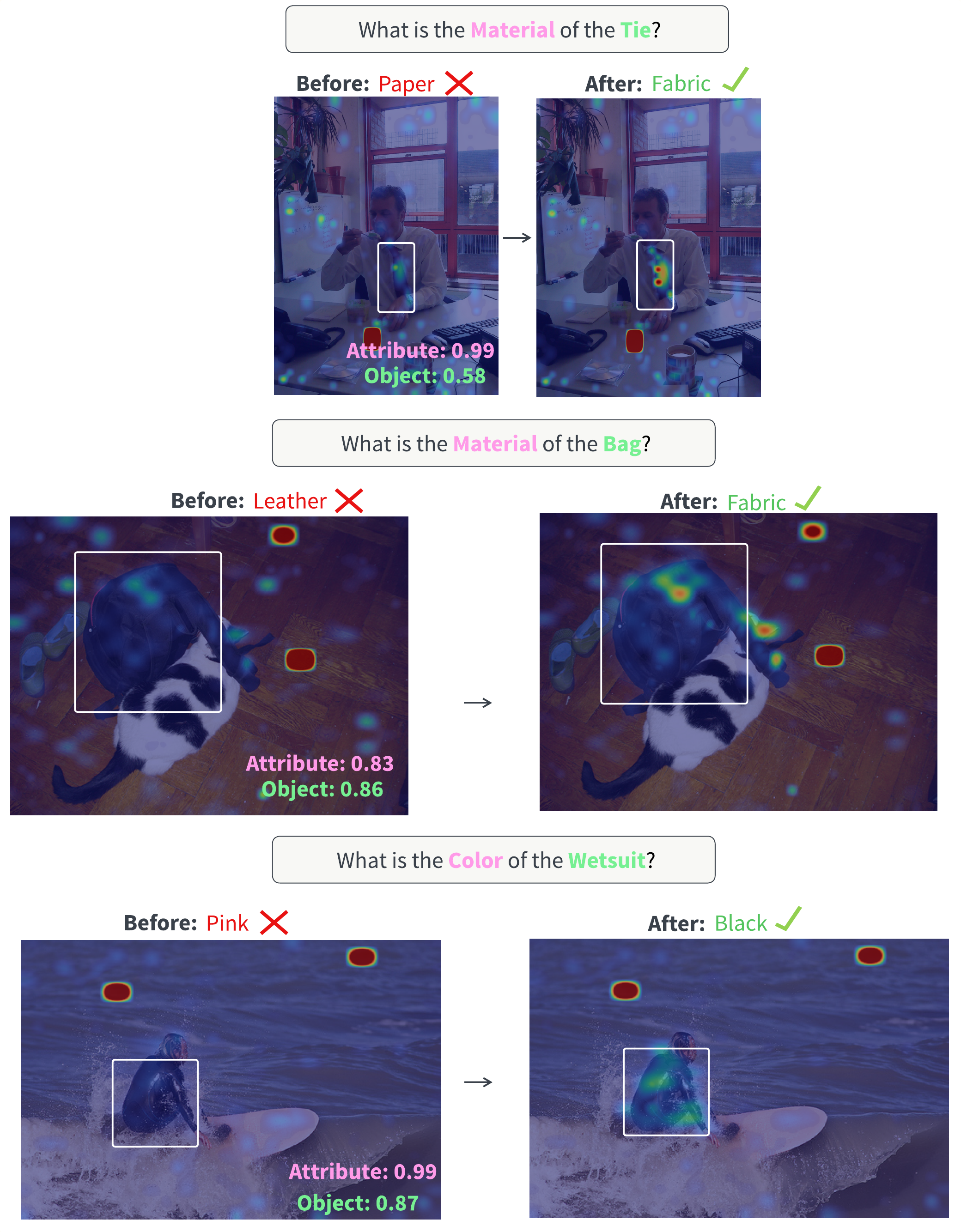}
    \setlength{\abovecaptionskip}{2pt}
    \caption{\textbf{Additional examples of the attention intervention.} Extends
    Figure~\ref{fig:attention_redirection_results}. Three evidence-positive
    errors, where \method detects both object and attribute evidence (scores shown per panel), and
    increasing attention to the annotated target-object region corrects the answer. Each panel shows
    attention from the \emph{last prompt token} to image patches, averaged over the
    last four decoder layers (head-mean). Left: no intervention. Right: prompt$\rightarrow$
    inside-object visual-token attention increased at strength $\alpha{=}2$, chosen as described in
    \S\ref{app_sub:attention-boost}.}
    \label{fig:app_attention_redirection}
\end{figure}

\section{\method\ Beyond the Evaluated Question Set}
\label{app:extensions}

Our experiments focus on a controlled class of visual questions: attribute questions about a known target object, with color, material, or shape as the queried category. This setting allows visual-evidence detection to be evaluated precisely, as the evidence required for answering is known and spatially grounded in an annotated object region $b$. The formulation of Section~\ref{subsec:problem_formulation} is not restricted to this class. It asks whether the VLM representations contain sufficient evidence to support the ground-truth answer, while \method operationalizes that evidence through three inputs: a target token set $\mathcal T^+$, a competing token set $\mathcal T^-$, and the visual tokens $\mathcal K$ over which evidence is measured. For object-attribute questions, this evidence is further decomposed into object and attribute components (Section~\ref{subsec:visual_evidence_object_attribute}). We characterize here which broader classes of questions admit analogous decompositions, and which require further extensions of the probe.

\subsection{A Taxonomy of Visual Questions}

Visual questions can be categorized along several dimensions. Linguistically, they may be polar questions (``\emph{Is the car yellow?}'') or wh-questions requesting an object, attribute, location, or number (``\emph{What color is the car?}'', ``\emph{Where is the dog?}'', ``\emph{How many cars are there?}''). Semantically, they may concern object existence or identity, one or several attributes, relations between objects, counting, or information requiring external knowledge. Most importantly for \method, they differ in the structure of the evidence required for answering them: a single concept in a single region, several concepts associated with the same region, several regions and the relations between them, or global scene-level information.

This last distinction determines whether \method applies directly. Let $C(Q,Y)=\{c_1,\ldots,c_m\}$ denote the visual concepts that must hold for $Y$ to be the answer to $Q$, each component $c_j$ measured with its own inputs $(\mathcal T^+_j,\mathcal T^-_j,\mathcal K_j)$. The composite probe then generalizes to
\begin{equation*}
M(X,Y)=\bigwedge_{j=1}^{m}M_j(X,Y),
\end{equation*}
where each $M_j$ tests whether the representations contain the corresponding required evidence, and the object-attribute probe of Section~\ref{subsec:visual_evidence_object_attribute} is the case $C=\{\mathrm{object},\mathrm{attribute}\}$. The central requirement is therefore not a particular question form, polar or wh-, but whether the evidence supporting an answer decomposes into identifiable grounded components.

\subsection{Question Classes That Naturally Fit This Formulation}

\paragraph{Attribute questions and verification.}
Our experiments consider wh-style attribute questions such as ``\emph{What is the color of the car?}'', but can be naturally extended to verification questions such as ``\emph{Is the car yellow?}'', which require the same two evidence components, one for the target object and one for the queried value. When the queried value has a closed candidate set, all three inputs are unchanged and the composite probe applies directly. When it does not, as in ``\emph{Is the shirt striped?}'', $\mathcal T^-$ extends beyond alternatives from the same category, for example to the rest of the vocabulary as in the object probe, which changes the scale of the score and requires refitting $\mathrm{LR}_{\mathrm{attr}}$, but not the formulation itself.

\paragraph{Object presence and localization.}
Questions such as ``\emph{Is there a bicycle in the image?}'' require evidence for an object concept rather than an object-attribute pair, with $\mathcal T^+$ representing the target object and $\mathcal T^-$ competing object concepts. For presence questions, $\mathcal K$ covers the whole image, $\mathcal K_{\mathrm{in}}\cup\mathcal K_{\mathrm{out}}$, and supervision comes from image-level presence annotations rather than from the in/out contrast of Section~\ref{subsec:probe_supervision}. Localization questions such as ``\emph{Where is the yellow car?}'' combine the same two components, but retain the local scores of Eq.~\ref{eq:patch_evidence_score} instead of aggregating them over $\mathcal K$, so that contiguous high-evidence tokens can be clustered into candidate regions. The same evidence thus answers two different questions, depending on whether it is aggregated or kept per visual token.

\paragraph{Compositional single-object questions.}
The formulation also extends when several properties of one object jointly determine the answer, as in ``\emph{Is the large metal cube red?}''. Each value contributes its own $\mathcal T^+_j$ and category-specific $\mathcal T^-_j$, while all components share the object's region $\mathcal K_{\mathrm{in}}$, and the composite probe is their conjunction. Syntactic complexity alone therefore poses no difficulty: questions with different surface realizations are handled identically when they reduce to the same grounded components.

\subsection{Question Classes Requiring More Substantial Extensions}

The setting becomes more challenging when the answer cannot be determined from evidence associated with individual concepts within a single object region. \textbf{Relational and comparative questions}, such as ``\emph{Is the cat to the left of the dog?}'' or ``\emph{Which car is larger?}'', require evidence not only for both entities but for the relation between them, and detecting the cat and the dog separately does not establish that one is left of the other. Extending \method would therefore require components defined over tuples of regions, replacing $M_j(X,Y;\mathcal K_j)$ with evidence functions over $\mathcal K_1,\ldots,\mathcal K_r$.

\textbf{Counting and global scene questions} present a related but distinct challenge. ``\emph{How many cars are there?}'' requires individuating instances and aggregating them without counting the same evidence twice, whereas ``\emph{Is this a crowded street?}'' depends on distributed scene-level evidence. Answering either from the local scores of Eq.~\ref{eq:patch_evidence_score} would require grouping them into instances and counting or summarizing the resulting groups, rather than testing a fixed set of components for their presence.

Finally, \textbf{knowledge-dependent questions}, such as ``\emph{What company manufactures this car?}'', require visual evidence together with knowledge stored parametrically in the language model. The distinction between visual-representation and visual-utilization failures is then less identifiable, as an incorrect answer may reflect missing visual evidence, missing knowledge, or a failure to combine the two. In such cases $\phi_X$ no longer isolates visual information, and $1-\mu_\phi$ no longer measures a visual-representation failure rate.

Overall, our experiments establish \method in a deliberately controlled setting rather than claiming empirical coverage of the full space of visual questions. Nevertheless, the method is not inherently tied to color, material, and shape questions. Its natural scope is broader: questions for which the evidence supporting an answer can be represented as a structured composition of visually grounded concepts. Extending from single-region components to relations, sets of instances, and mixed visual and parametric evidence offers a natural path toward diagnosing representation and utilization failures in more general multimodal tasks.

\end{document}